\newtheorem{thm}{Theorem}
\newtheorem{lem}[thm]{Lemma}
\theoremstyle{definition}
\newtheorem{defn}{Definition}
\theoremstyle{remark}
\newcommand{\myparagraph}[1]{\noindent\textbf{#1} }
\def\shownotes{1}  
\newcommand{\authnote}[2]{{$\ll$\textsf{\footnotesize #1 notes: #2}$\gg$}}
\newcommand{\authnote}[2]{}
\newcommand{\YL}[1]{}
\newcommand{\sid}[1]{}
\title{Functional Regularization for Representation Learning: A Unified Theoretical Perspective}
\author{Siddhant Garg\thanks{\ \ Work completed at the University of Wisconsin-Madison}\\
Amazon Alexa AI Search\\
Manhattan Beach, CA, USA\\
\texttt{sidgarg@amazon.com} \\
\And Yingyu Liang\\
Department of Computer Sciences\\
University of Wisconsin-Madison\\
\texttt{yliang@cs.wisc.edu}
}
\begin{document}

\maketitle

\begin{abstract}
Unsupervised and self-supervised learning approaches have become a crucial tool to learn representations for downstream prediction tasks. While these approaches are widely used in practice and achieve impressive empirical gains, their theoretical understanding largely lags behind. Towards bridging this gap, we present a unifying perspective where several such approaches can be viewed as imposing a regularization on the representation via a \emph{learnable} function using unlabeled data. We propose a discriminative theoretical framework for analyzing the sample complexity of these approaches, which generalizes the framework of~\cite{10.1145/1706591.1706599} to allow learnable regularization functions. Our sample complexity bounds show that, with carefully chosen hypothesis classes to exploit the structure in the data, these learnable regularization functions can prune the hypothesis space, and help reduce the amount of labeled data needed. We then provide two concrete examples of functional regularization, one using auto-encoders and the other using masked self-supervision, and apply our framework to quantify the reduction in the sample complexity bound of labeled data. We also provide complementary empirical results to support our analysis. 
\end{abstract}

\section{Introduction}

Advancements in machine learning have resulted in large prediction models, which need large amounts of labeled data for effective learning. 
Expensive label annotation costs  have increased the popularity of unsupervised (or self-supervised) representation learning techniques using additional unlabeled data. 
These techniques learn a representation function on the input, and a prediction function over the representation for the target prediction task.
Unlabeled data is utilised by posing an auxiliary unsupervised learning task on the representation, e.g., using the representation to reconstruct the input.
Some popular examples of the auxiliary task are auto-encoders~\cite{6302929,10.5555/3045796.3045801}, sparse dictionaries~\cite{6626561}, 
masked self-supervision~\cite{devlin-etal-2019-bert}, manifold learning~\cite{cayton2005algorithms}, among others~\cite{bengio2012representation}. 
These approaches have been extensively used in applications in domains such as computer vision (e.g.,~\cite{10.1145/1390156.1390294,48bcd1ffcba540238cc38891dd919d4d,NIPS2014_5548}) and natural language processing (e.g.,~\cite{pmlr-v70-yang17d,devlin-etal-2019-bert,DBLP:journals/corr/abs-1907-11692}), and have achieved impressive empirical performance. 

An important takeaway from these empirical studies is that learning representations using unlabeled data can drastically reduce the size of labeled data needed for the prediction task.  
In contrast to the popularity and impressive practical gains of these representation learning approaches, there have been far fewer theoretical studies focused at understanding them, most of which have been specific to individual approaches. 
While intuition dictates that the unlabeled and labeled data distributions along with the choice of models are crucial factors which govern the empirical gains, theoretically there is still ambiguity over questions like \textit{"When can the auxiliary task over the unlabeled data help the target prediction task? How much can it reduce the sample size of the labeled data by?"}

In this paper, we take a step towards improving the theoretical understanding of the benefits of learning representations for the target prediction task via an auxiliary task. We focus on analyzing the sample complexity of labeled and unlabeled data for this learning paradigm. Such an analysis can help to identify conditions when a significant reduction in sample complexity of the labeled data can be achieved. Arguably, this is one of the most fundamental questions for this learning paradigm, and existing literature on this has been limited and scattered, specific to individual approaches.

Our contribution is to propose a unified perspective where several representation learning approaches can be viewed as if they impose a regularization on the representation via a learnable regularization function. Under this paradigm, representations are learned jointly on unlabeled and labeled data. The former is used in the auxiliary task to learn the representation and the regularization function. The latter is used in the target prediction task to learn the representation and the prediction function. Henceforth, we refer to this paradigm as \textit{representation learning via functional regularization}. 

In particular, we present a PAC-style discriminative framework~\cite{valiant1984theory} to bound the sample complexities of labeled and unlabeled data under different assumptions on the models and data distributions. This is inspired from the work of~\cite{10.1145/1706591.1706599} which bounds the sample complexities of labeled and unlabeled data for semi-supervised learning. Our generalized framework allows \emph{learnable} regularization functions and thus unifies multiple unsupervised (or self-supervised) representation learning approaches. Our analysis shows that functional regularization with unlabeled data can prune the model hypothesis class for learning representations, reducing the labeled data required for the prediction task. 

To demonstrate the application of our framework, we construct two concrete examples of functional regularization, one using auto-encoder and the other using masked self-supervision. These specific functional regularization settings allow us to quantify the reduction in the sample bounds of labeled data more explicitly. While our main focus is the theoretical framework, we also provide complementary empirical support through experiments on synthetic and real data. Now we first discuss related work followed by a formal problem description. Then we present our theoretical framework involving sample complexity bounds followed by the concrete examples with empirical support.

\section{Related Work}
\label{sec:related_work}

Self-supervised learning approaches for images have been extensively used in computer vision through auxiliary tasks such as masked image patch prediction~\cite{NIPS2014_5548}, image rotations~\cite{gidaris2018unsupervised}, pixel colorization~\cite{inproceedingszhang}, context prediction of image patches~\cite{10.1109/ICCV.2015.167,Pathak2016ContextEF,noroozi2016,hnaff2019dataefficient}, etc. 
Additionally, variants of these approaches find practical use in the field of robotics~\cite{article_sofman,Pinto2015SupersizingSL,article_pulkit,Ebert2017SelfSupervisedVP,pmlr-v87-jang18a}. Masked self-supervision (a type of denoising auto-encoder), where representations are learnt by hiding a portion of the input and then reconstructing it, has lead to powerful language models like BERT~\cite{devlin-etal-2019-bert} and RoBERTa~\cite{DBLP:journals/corr/abs-1907-11692} in natural language processing. 
There have also been numerous studies on other representation learning approaches such as
RBMs~\cite{10.1145/1390156.1390224,10.5555/2503308.2188407}, 
dictionary learning~\cite{rodriguez2008sparse,NIPS2008_3448} and manifold learning~\cite{NIPS2011_4409}; \cite{bengio2012representation} presents an extensive review of multiple representation learning approaches.
 
On the theoretical front,~\cite{10.1145/1706591.1706599} presents a discriminative framework for analyzing semi-supervised learning showing that unlabeled data can reduce the labeled sample complexity.
Our framework in this paper is inspired from~\cite{10.1145/1706591.1706599}, and generalizes their analysis for utilizing unlabeled data through a \emph{learnable} regularization function. This allows a unified theoretical framework to study multiple representation learning approaches. In addition to~\cite{10.1145/1706591.1706599}, \cite{10.5555/1208768} also studies the benefits of using unlabeled data, but by restricting that the unlabeled data be utilized through a fixed function.
Some other works~\cite{ben2010theory,10.1007/978-3-642-34106-9_14} have explored the benefits of unlabeled data for domain adaptation. Our setting differs from this since our goal is to learn a prediction function on the labeled data, rather than for a change in the domain of labeled data from source to target.
Another line of related work considers multi-task learning, such as~\cite{10.5555/1622248.1622254, 10.1109/TPAMI.2016.2544314}. These works show that multiple supervised learning tasks on different, but related, data distributions can help generalization. Our work differs from these since we focus on learning a supervised task using auxiliary unsupervised tasks on unlabeled data.

\cite{10.5555/1756006.1756025} presents a comprehensive empirical study on the benefits of unsupervised pre-training for image-classification tasks in computer vision. Our analysis in this paper is motivated by their empirical results showing that pre-training shrinks the hypothesis space searched during learning.
There have also been theoretical studies on several representation approaches individually, without providing a holistic perspective. \cite{pmlr-v97-saunshi19a} presents a theoretical framework to analyse unsupervised representation learning techniques that can be posed as a contrastive learning problem, 
with their results later improved by~\cite{nozawa2019pacbayesian}. \cite{10.5555/3157382.3157468} provide a theoretical analysis of unsupervised learning from an optimization viewpoint, with applications to dictionary learning and spectral auto-encoders. 
\cite{NIPS2018_7296} prove uniform stability generalization bounds for linear auto-encoders and empirically demonstrate the benefits of using supervised auto-encoders. Additionally, there are some studies on learning transferable representations using multiple tasks~\cite{du2020few,tripuraneni2020theory,hanneke2020no}. Another line of related work includes approaches~\cite{bachman2019learning,tschannen2020mutual} that analyze representation learning from the perspective of maximizing the mutual information between the data and the representation. Connecting these mutual information approaches with our framework is left as future work.

\section{Problem Formulation}
\label{sec:problem_defn}
Consider labeled data $S = \{(x_i, y_i)\}_{i=1}^{m_\ell}$ from a distribution $\mathcal{D}$ over the domains $\mathcal{X} \times \mathcal{Y}$, where $\mathcal{X} \subseteq \mathbb{R}^d$ is the input feature space and $\mathcal{Y}$ is the label space. The goal is to learn a predictor $p:\mathcal{X} \rightarrow \mathcal{Y}$ that fits $\mathcal{D}$. This can be achieved by first learning a representation function $\phi = h(x) \in \mathbb{R}^r$ over the input, and then learning a predictor $y {=} f(\phi) {\in} \mathcal{Y}$ on the representation. Denote the hypothesis classes for $h$ and $f$ by $\mathcal{H}$ and $\mathcal{F}$ respectively, and the loss function by $\ell_c(f{(h(x))},y)$. Without loss of generality, we assume $\ell_c \in [0,1]$. 
We are interested in representation learning approaches where $ h(x)$ is learned with the help of an auxiliary task on unlabeled data $U=\{\Tilde{x}_i\}_{i=1}^{m_u}$ from a distribution $\mathcal{U}_X$ (same or different from the marginal distribution $\mathcal{D}_X$ of $\mathcal{D}$). Such an auxiliary task is posed as an unsupervised (or rather a self-supervised) learning task depending only on the input feature $x$. 

Representation learning using auto-encoders is an example that fits this consideration, where given input $x$, the goal is to learn $\phi=h(x)$ s.t. $x$ can be decoded back from $h(x)$. More precisely, the decoder $d$ takes the representation $\phi$ and decodes it to $\hat{x} {=} g(\phi) \in \mathbb{R}^d$. $h$ and $g$ are learnt by minimizing the reconstruction error between $\hat{x}$ and $x$ (e.g., $\|x - \hat{x}\|_2 {=} \|x {-} g(h(x))\|_2$). Representation learning via masked self-supervision is another example of our problem setting, where the goal is to learn $\phi=h(x)$ such that a part of the input (e.g., $x_1$) can be predicted from the representation of the remaining part (e.g., $h(x')$ on input $x' = [0, x_2, \ldots, x_d]$ with $x_1$ masked). This approach uses a function $g$ over  the representation to predict the masked part of the input. $h$ and $g$ are optimized by minimizing the error between the true and the predicted values (e.g., $(x_1 {-} g(h(x')))^2$).

Now consider a simple example of a regression problem where $y{=} \sum_{i=1}^d x_i$ and we use masked self-supervision to learn $x_1$ from $x'$. If each $x_i {\sim} \mathcal{N}(0, 1)\  i.i.d.$, then $h$ will not be able to learn a meaningful representation of $x$ for predicting $y$, since $x_1$ is independent of all other coordinates of $x$. On the other extreme, if all $x_i$'s are equal, $h$ can learn the perfect underlying representation for predicting $y$, which corresponds to a single coordinate of $x$. This shows two contrasting abstractions of the inherent structure in the data and how the benefits of using a specific auxiliary task may vary. Our work aims at providing a framework for analyzing sample complexity and clarifying these subtleties on the benefits of the auxiliary task depending on the data distribution. 

\section{Functional Regularization: A Unified Perspective}
\label{sec:framework} 

We make a key observation that the auxiliary task in several representation learning approaches provides a regularization on the representation function via a learnable function. To better illustrate this viewpoint, consider the auxiliary task of an auto-encoder, where the decoder $g(\phi)$ can be viewed as such a learnable function, and the reconstruction error $\|x - g(h(x))\|_2$ can be viewed as a regularization penalty imposed on $h$ through the decoder $g$ for the data point $x$. 

To formalize this notion, we consider learning representations via an auxiliary task which involves: a learnable function $g$, and a loss of the form $L_r(h, g ; x)$ on the representation $h$ via $g$ for an input $x$. We refer to $g$ as the regularization function and $L_r$ as the regularization loss. Let $\mathcal{G}$ denote the hypothesis class for $g$. Without loss of generality we assume that $L_r \in [0,1]$.  

\begin{defn}
Given a loss function $L_r(h, g; x)$ for an input $x$ involving a representation $h$ and a regularization function $g$, the regularization loss of $h$ and $g$ on a distribution $\mathcal{U}_X$ over $\mathcal{X}$ is defined as
\begin{align} 
  L_r(h, g\ ; \mathcal{U}_X) := \mathbb{E}_{x \sim \mathcal{U}_X} \left[  L_r(h, g; x) \right]. 
\end{align}
The regularization loss of a representation function $h$ on $\mathcal{U}_X$ is defined as
\begin{align} 
  L_r(h\ ; \mathcal{U}_X) := \min_{g\in \mathcal{G}} L_r(h, g\ ; \mathcal{U}_X).
\end{align}
We can similarly define $L_r(h, g\ ; U)$ and $L_r(h\ ; U)$ to denote the loss over a fixed set $U$ of unlabeled data points, i.e., $L_r(h, g\ ; U) := \frac{1}{|U|}\sum_{x \in U} L_r(h, g\ ; x)$ and $L_r(h\ ; U) := \min_{g\in \mathcal{G}} L_r(h, g\ ; U)$. 
\end{defn}
Here, $L_r(h\ ;\mathcal{U}_X)$ can be viewed as a notion of incompatibility of a representation function $h$ on the data distribution $\mathcal{U}$. This formalizes the prior knowledge about the representation function and the data. For example, in auto-encoders $L_r(h\ ;\mathcal{U}_X)$ measures how well the representation function $h$ complies with the prior knowledge of the input being reconstructible from the representation. 

We now introduce a notion for the subset of representation functions having a bounded regularization loss, which is crucial for our sample complexity analysis.
\begin{defn}
Given $\tau \in [0,1]$, the $\tau$-regularization-loss subset of representation hypotheses $\mathcal{H}$ is:
\begin{align} 
    \mathcal{H}_{\mathcal{D}_X, L_r}(\tau)  := \{h \in \mathcal{H}: L_r(h\ ; \mathcal{D}_X) \le \tau\}.
\end{align}
\end{defn}

We also define the prediction loss over the data distribution $\mathcal{D}$ for a prediction function $f$ on top of $h$: 
$
  L_c(f, h\ ; \mathcal{D})  := \mathbb{E}_{(x,y) \sim \mathcal{D}} \left[  \ell_c(f(h(x)), y) \right],
$
where $\ell_c$ is the loss function for prediction defined in Section~\ref{sec:problem_defn}.
Similarly, the empirical loss on the labeled data set $S$ is $L_c(f, h\ ; S) := \frac{1}{|S|}\sum_{(x,y) \in S} \ell_c(f(h(x)), y)$. In summary, given hypothesis classes $\mathcal{H}, \mathcal{F}$, and $\mathcal{G}$, a labeled dataset $S$, an unlabeled dataset $U$, and a threshold $\tau>0$ on the regularization loss, we consider the following learning problem:
\begin{align}
\label{eqn:learning_problem}
    \min_{f \in \mathcal{F}, h \in \mathcal{H}} L_c(f, h\ ; S), \textrm{~~s.t.~~} L_r(h\ ; U) \le \tau.
\end{align}

\subsection{Instantiations of Functional Regularization}
Here we show some popular representation learning approaches as instantiations of our framework by specifying the analogous regularization functions $\mathcal{G}$ and regularization losses $L_r$. We mention several other instantiations of our framework like manifold learning, dictionary learning, etc in Appendix~\ref{app:instance}. 

\myparagraph{Auto-encoder.} Recall from Section~\ref{sec:problem_defn}, there is a decoder function $g$ that takes the representation $h(x)$ and decodes it to $\hat{x} {=} g(h(x))$, where $h$ and $g$ are learnt by minimizing the error $\|x - \hat{x}\|_2$. The reconstruction error corresponds to the regularization loss $L_r(h,g\ ;x)$. $\mathcal{H}_{\mathcal{D}_X, L_r}(\tau)$ is the subset of representation functions with at most $\tau$ reconstruction error using the best decoder in $\mathcal{G}$.

\myparagraph{Masked Self-supervised Learning.}
Recall the simple example from Section~\ref{sec:problem_defn} where $x_1$, the first dimension of input $x$, is predicted from the representation $h(x')$ of the remaining part (i.e., $x' = [0, x_2, \ldots, x_d]$ with $x_1$ masked).
The prediction function for $x_1$ using $h(x')$ corresponds to the regularization function $g$, and $\|x_1 {-} g(h(x'))\|^2_2$ corresponds to the regularization loss $L_r(h,g\ ;x)$. $\mathcal{H}_{\mathcal{D}_X, L_r}(\tau)$ is the subset of $\mathcal{H}$ which have at most $\tau$ MSE on predicting $x_1$ using the best function $g$.
More general denoising auto-encoder methods can be similarly mapped in our framework.

\myparagraph{Variational Auto-encoder.} 
VAEs encode the input $x$ as a distribution $q_{\phi}(z|x)$ over a parametric latent space $z$ instead of a single point, and sample from it to reconstruct $x$ using a decoder $p_{\theta}(x|z)$. The encoder $q_{\phi}(z|x)$ is used to model the underlying mean $\mu_z$ and co-variance matrix $\sigma_z$ of the distribution over $z$. VAEs are trained by minimising a loss over parameters $\theta$ and $\phi$
$$
\mathcal{L}_{x}(\theta,\phi) = -\mathbb{E}_{z \sim q_{\phi}(z|x)}[\log \ p_{\theta}(x|z)] + \mathbb{KL}(q_{\phi}(z|x)||p(z))
$$ 
where $p(z)$ is specified as the prior distribution over $z$ (e.g., $\mathcal{N}(0,1)$). The encoder $q_{\phi}(z|x)$ can be viewed as the representation function $h$, the decoder $p_{\theta}(x|z)$ as the learnable regularization function $g$, and the loss $\mathcal{L}_{x}(\theta,\phi)$ as the regularization loss $L_r(h, g; x)$ in our framework. Then $\mathcal{H}_{\mathcal{D}_X, L_r}(\tau)$ is the subset of encoders $q_{\phi}$ which have at most $\tau$ VAE loss when using the best decoder $p_\theta$ for it.

\subsection{Sample Complexity Analysis}\label{sec:complexity}

To analyze the sample complexity of representation learning via functional regularization, we generalize the analysis from~\cite{10.1145/1706591.1706599} by extending it from semi-supervised learning with unlabeled data using a fixed regularization function to the general setting of using \emph{learnable} regularization functions with unlabeled data.
We first enumerate the different considerations on the data distributions and the hypothesis classes: 1) the labeled and unlabeled data can either be from the same or different distributions (i.e., same domain or different domains); 2) the hypothesis classes can contain zero error hypothesis or not (i.e., being realizable or unrealizable); 3) the hypothesis classes can be finite or infinite in size. We perform sample complexity analysis for different combinations of these assumptions. While the bounds across different settings have some differences, the proofs share a common high-level intuition. We now present sample complexity bounds for three interesting, characteristic settings. We present  bounds for several other settings along with proofs of all the theorems in Appendix~\ref{app:bound}.

\myparagraph{Same Domain, Realizable, Finite Hypothesis Classes.}
We start with the simplest setting, where the unlabeled dataset $U$ and the labeled dataset $S$ are from the same distribution $\mathcal{D}_X$, and the hypothesis classes $\mathcal{F}, \mathcal{G}, \mathcal{H}$ contain functions $ f^*, g^*, h^*$ with a zero prediction and regularization loss. We further assume that the hypothesis classes are finite in size. We can prove the following result:
\begin{restatable}{thm}{samerealizable}
\label{thm:samerealizable}
Suppose there exist $h^* \in \mathcal{H}, f^* \in \mathcal{F}, g^* \in \mathcal{G}$ such that  $L_c(f^*, h^*; \mathcal{D})=0 $ and $L_r(h^*, g^*; \mathcal{D}_X) = 0$.
For any $\epsilon_0, \epsilon_1 \in (0, 1/2)$, a set $U$ of $m_u$ unlabeled examples and a set $S$ of $m_l$ labeled examples are sufficient to learn to an error $\epsilon_1$ with probability $1 - \delta$, where
\begin{align}
\label{eqn:thm1_bounds}
    \begin{split}
    m_u \ge \frac{1}{\epsilon_0} \left[ \ln|\mathcal{G}| + \ln| \mathcal{H}| + \ln\frac{2}{\delta} \right],
    \end{split}
    \begin{split}
    m_l \ge \frac{1}{\epsilon_1} \left[ \ln|\mathcal{F} | + \ln |\mathcal{H}_{\mathcal{D}_X, L_r}(\epsilon_0) | + \ln\frac{2}{\delta} \right].
    \end{split}
\end{align}
In particular, with probability at least $1 - \delta$, all hypotheses $h \in \mathcal{H}, f \in \mathcal{F}$ with $L_c(f, h; S) = 0$ and
$L_r(h; U) = 0$ will have $L_c(f, h; \mathcal{D}) \le \epsilon_1$.
\end{restatable}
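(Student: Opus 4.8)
The plan is a two-stage PAC argument that mirrors the classical realizable analysis but inserts an intermediate pruning step driven by the unlabeled data. The two stages are essentially decoupled because the pruned class $\mathcal{H}_{\mathcal{D}_X, L_r}(\epsilon_0)$ is defined with respect to the true distribution $\mathcal{D}_X$, hence is a fixed, data-independent set; this is exactly what lets me apply a plain union bound over a deterministic hypothesis class in the second stage, with the (potentially much smaller) pruned cardinality $\ln|\mathcal{H}_{\mathcal{D}_X, L_r}(\epsilon_0)|$ appearing in the labeled bound.

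\textbf{Stage 1 (pruning via $U$).} First I would show that, with probability at least $1 - \delta/2$ over the draw of $U$, every $h$ satisfying $L_r(h; U) = 0$ lies in $\mathcal{H}_{\mathcal{D}_X, L_r}(\epsilon_0)$. The key is to unfold the $\min_g$: by definition $L_r(h; U) = 0$ means there exists $g \in \mathcal{G}$ with $L_r(h, g; U) = 0$. Call a pair $(h, g)$ bad if $L_r(h, g; U) = 0$ but $L_r(h, g; \mathcal{D}_X) > \epsilon_0$. Since $L_r \in [0,1]$ and the $m_u$ points are i.i.d., the probability that a fixed bad pair attains zero empirical loss is at most $(1 - \epsilon_0)^{m_u} \le e^{-\epsilon_0 m_u}$. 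A union bound over all $|\mathcal{H}|\,|\mathcal{G}|$ pairs bounds the probability of any bad pair by $|\mathcal{H}|\,|\mathcal{G}|\,e^{-\epsilon_0 m_u}$, which is at most $\delta/2$ precisely under the stated bound on $m_u$. On this good event, the witnessing $g$ for any $h$ with $L_r(h;U)=0$ cannot be bad, so $L_r(h; \mathcal{D}_X) = \min_{g'} L_r(h, g'; \mathcal{D}_X) \le \epsilon_0$, i.e.\ $h \in \mathcal{H}_{\mathcal{D}_X, L_r}(\epsilon_0)$.

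\textbf{Stage 2 (prediction on the pruned class).} Treating $\mathcal{H}_{\mathcal{D}_X, L_r}(\epsilon_0)$ as a fixed finite class, I would run the standard realizable union bound over the product $\mathcal{F} \times \mathcal{H}_{\mathcal{D}_X, L_r}(\epsilon_0)$: any pair $(f,h)$ in this product with $L_c(f,h;\mathcal{D}) > \epsilon_1$ achieves zero empirical loss on $S$ with probability at most $(1-\epsilon_1)^{m_l} \le e^{-\epsilon_1 m_l}$, so a union bound over the $|\mathcal{F}|\,|\mathcal{H}_{\mathcal{D}_X, L_r}(\epsilon_0)|$ pairs keeps the failure probability below $\delta/2$ exactly under the stated bound on $m_l$. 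A final union bound over the two stages yields total failure probability at most $\delta$: any feasible $(f,h)$ satisfies $h \in \mathcal{H}_{\mathcal{D}_X, L_r}(\epsilon_0)$ by Stage 1 and therefore $L_c(f,h;\mathcal{D}) \le \epsilon_1$ by Stage 2. Realizability enters only to guarantee feasibility of \eqref{eqn:learning_problem}: $L_r(h^*,g^*;\mathcal{D}_X)=0$ forces $L_r(h^*;U)=0$ and $L_c(f^*,h^*;\mathcal{D})=0$ forces $L_c(f^*,h^*;S)=0$ almost surely, so a zero-training-loss solution exists.

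\textbf{Main obstacle.} The delicate point is the interaction between the $\min_g$ in $L_r(h;\cdot)$ and the requirement that Stage 2's union bound range over a non-random class. I must take the Stage 1 union bound over pairs $(h,g)$ rather than over $h$ alone, and I must keep the target set $\mathcal{H}_{\mathcal{D}_X, L_r}(\epsilon_0)$ defined through the \emph{population} regularization loss so that it does not depend on $U$; otherwise the two stages could not be separated and the labeled bound could not exploit the pruned cardinality. Everything else reduces to the elementary zero-error deviation bound $(1-\epsilon)^m \le e^{-\epsilon m}$.
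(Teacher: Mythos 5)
Your proof is correct and follows essentially the same two-stage argument as the paper's: a union bound over $(h,g)$ pairs using the bound $(1-\epsilon_0)^{m_u}$ to show that every $h$ with $L_r(h;U)=0$ lies in the population-defined pruned class $\mathcal{H}_{\mathcal{D}_X,L_r}(\epsilon_0)$, followed by the standard realizable union bound over $\mathcal{F}\times\mathcal{H}_{\mathcal{D}_X,L_r}(\epsilon_0)$ on the labeled data, with the failure probability split as $\delta/2+\delta/2$. Your explicit check of feasibility (that $(f^*,h^*,g^*)$ attains zero empirical losses almost surely, so a zero-training-loss solution exists) is a minor addition that the paper leaves implicit; otherwise the arguments coincide.
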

  
Theorem~\ref{thm:samerealizable} shows that, if the target function $f^* {\circ} h^*$ is indeed perfectly correct for the target prediction task, and $h^* {\circ} g^*$ has a zero regularization loss, then optimizing the prediction and regularization loss to 0 over the above mentioned number of data points (Equation~\ref{eqn:thm1_bounds}) is sufficient to learn an accurate predictor in the PAC learning sense. 

Recall that PAC analysis for the standard setting of the prediction task only using labeled data (and no unlabeled data) shows that $\frac{1}{\epsilon_1} \left[ \ln|\mathcal{F} | + \ln|\mathcal{H} | + \ln\frac{2}{\delta} \right] $ labeled points are needed to get the same error guarantee. On comparing the bounds, Theorem~\ref{thm:samerealizable} shows that the functional regularization can prune away some hypotheses in $\mathcal{H}$; thereby replacing the factor $\mathcal{H}$ with its subset $\mathcal{H}_{\mathcal{D}_X, L_r}(\epsilon_0)$ in the bound. Thus, the sample complexity bound is reduced by $\frac{1}{\epsilon_1} \left[ \ln|  \mathcal{H}| {-} \ln |  \mathcal{H}_{\mathcal{D}_X, L_r}(\epsilon_0)| \right ]$. Equivalently, the error is reduced by $\frac{1}{m_\ell} \left[ \ln| \mathcal{H} | {-} \ln |  \mathcal{H}_{\mathcal{D}_X, L_r}(\epsilon_0) | \right ]$ when using $m_\ell$ labeled data.
So the auxiliary task is helpful for learning the predictor when $\mathcal{H}_{\mathcal{D}_X, L_r}(\epsilon_0)$ is significantly smaller than $\mathcal{H}$, avoiding the requirement of a large number of labeled points to find a good representation function among them. 

\myparagraph{Same Domain, Unrealizable, Infinite Hypothesis Classes.}
We now present the result for a more elaborate setting, where both the prediction and regularization losses are non-zero. We also relax the assumptions on the hypothesis classes being finite. We use metric entropy to measure the capacity of the hypothesis classes for demonstration here. Alternative capacity measures like VC-dimension or Rademacher complexity can also be used with essentially no change to the analysis.
Assume that the parameter space of $\mathcal{H}$ is equipped with a norm and let $\mathcal{N}_\mathcal{H}(\epsilon)$ denote the $\epsilon$-covering number of $\mathcal{H}$; similarly for $\mathcal{F}$ and $\mathcal{G}$. Let the Lipschitz constant of the losses w.r.t.\ these norms be bounded by $L$.

\begin{restatable}{thm}{thmcovering}
\label{thm:covering}
Suppose there exist $h^* \in \mathcal{H}, f^* \in \mathcal{F}, g^* \in \mathcal{G}$ such that  $L_c(f^*, h^*; \mathcal{D}) \le \epsilon_c $ and $L_r(h^*, g^*; \mathcal{D}_X) \le \epsilon_r$.
For any $\epsilon_0, \epsilon_1 \in (0, 1/2)$, a set $U$ of $m_u$ unlabeled examples and a set $S$ of $m_l$ labeled examples are sufficient to learn to an error $\epsilon_c + \epsilon_1$ with probability $1 - \delta$, where
\begin{align}
    m_u & \ge \frac{C}{\epsilon^2_0} \ln\frac{1}{\delta} \left[ \ln\mathcal{N}_{\mathcal{G}}\left(\frac{\epsilon_0}{4L}\right) {+} \ln\mathcal{N}_{\mathcal{H} }\left(\frac{\epsilon_0}{4L}\right) \right],
    \\
    m_l & \ge \frac{C}{\epsilon^2_1} \ln\frac{1}{\delta} \left[ \ln\mathcal{N}_{\mathcal{F} }\left(\frac{\epsilon_1}{4L}\right) + \ln\mathcal{N}_{ \mathcal{H}_{\mathcal{D}_X, L_r}(\epsilon_r {+} 2\epsilon_0) }\left(\frac{\epsilon_1}{4L}\right) \right]  \label{eqn:bound_unrealizable}
\end{align}
for some absolute constant $C$.
In particular, with probability at least $1 - \delta$, the $h \in \mathcal{H}, f \in \mathcal{F}$ that optimize $L_c(f, h; S)$ subject to $L_r(h; U) \le \epsilon_r + \epsilon_0$ have $L_c(f, h; \mathcal{D}) \le L_c(f^*, h^*; \mathcal{D})  + \epsilon_1$.
\end{restatable}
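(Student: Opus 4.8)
The plan is to run a \emph{two-stage uniform convergence} argument that mirrors the finite, realizable case of Theorem~\ref{thm:samerealizable} but replaces the cardinality counts with covering-number bounds and absorbs the additive slack coming from the non-zero losses $\epsilon_c,\epsilon_r$. The two stages use the independent samples $U$ and $S$ respectively, so their failure events can be intersected by a union bound, each allotted probability $\delta/2$. The recurring primitive in each stage is: take an $\frac{\epsilon}{4L}$-net of the relevant (product) class, apply Hoeffding's inequality (valid since the losses lie in $[0,1]$) together with a union bound over the net, and then transfer the guarantee to the whole class via the $L$-Lipschitzness of the loss, losing an extra $\epsilon/2$.

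\emph{First stage (unlabeled data, pruning).} I would establish uniform convergence of the regularization loss over the \emph{joint} class $\mathcal{H}\times\mathcal{G}$. An $\frac{\epsilon_0}{4L}$-net of $\mathcal{H}\times\mathcal{G}$ has cardinality at most $\mathcal{N}_{\mathcal{H}}(\frac{\epsilon_0}{4L})\,\mathcal{N}_{\mathcal{G}}(\frac{\epsilon_0}{4L})$; Hoeffding over the $m_u$ points of $U$ plus a union bound pins each net point to within $\epsilon_0/2$, and Lipschitzness spreads this to all of $\mathcal{H}\times\mathcal{G}$, giving $\sup_{h,g}|L_r(h,g;U)-L_r(h,g;\mathcal{D}_X)|\le\epsilon_0$. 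Taking the minimum over $g$ (a $1$-Lipschitz operation) yields $\sup_h|L_r(h;U)-L_r(h;\mathcal{D}_X)|\le\epsilon_0$. Two consequences follow, and they are the crux: (i) since $L_r(h^*;\mathcal{D}_X)\le\epsilon_r$, we get $L_r(h^*;U)\le\epsilon_r+\epsilon_0$, so $h^*$ is feasible for the empirical constraint; and (ii) every $h$ with $L_r(h;U)\le\epsilon_r+\epsilon_0$ obeys $L_r(h;\mathcal{D}_X)\le\epsilon_r+2\epsilon_0$, so the \emph{random} feasible set is sandwiched between $\{h^*\}$ and the fixed, population-defined class $\mathcal{H}_{\mathcal{D}_X,L_r}(\epsilon_r+2\epsilon_0)$.

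\emph{Second stage (labeled data) and chaining.} Because $\mathcal{H}_{\mathcal{D}_X,L_r}(\epsilon_r+2\epsilon_0)$ is defined through the population $\mathcal{D}_X$, it is deterministic and independent of $S$, so I may take an $\frac{\epsilon_1}{4L}$-net of $\mathcal{F}\times\mathcal{H}_{\mathcal{D}_X,L_r}(\epsilon_r+2\epsilon_0)$ and repeat the Hoeffding-plus-Lipschitz step over the $m_l$ points of $S$ to get $\sup|L_c(f,h;S)-L_c(f,h;\mathcal{D})|\le\epsilon_1/2$ for all $f\in\mathcal{F}$ and all $h$ in the pruned class; this is precisely where the covering number of the pruned class replaces that of $\mathcal{H}$ in the labeled bound. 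Let $(\hat f,\hat h)$ minimize $L_c(f,h;S)$ subject to $L_r(h;U)\le\epsilon_r+\epsilon_0$. By consequence (ii) $\hat h\in\mathcal{H}_{\mathcal{D}_X,L_r}(\epsilon_r+2\epsilon_0)$, so the second-stage bound applies to it, and by consequence (i) $(f^*,h^*)$ is feasible, giving $L_c(\hat f,\hat h;S)\le L_c(f^*,h^*;S)$. Chaining population$\to$empirical (on $\hat h$)$\to$optimality$\to$empirical$\to$population (on $h^*$, which lies in the pruned class since $\epsilon_r\le\epsilon_r+2\epsilon_0$) yields $L_c(\hat f,\hat h;\mathcal{D})\le L_c(f^*,h^*;\mathcal{D})+\epsilon_1\le\epsilon_c+\epsilon_1$.

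The main obstacle I anticipate is ensuring the \emph{non-circularity} of the pruning step: the optimization constraint lives on the random sample $U$, whereas the second-stage uniform convergence must be carried out over a \emph{fixed} subset of $\mathcal{H}$. The resolution is to cover the population-defined class $\mathcal{H}_{\mathcal{D}_X,L_r}(\epsilon_r+2\epsilon_0)$—independent of both samples—rather than the empirical feasible set, and to invoke the first-stage sandwich to guarantee that $\hat h$ lands in this fixed class; the independence of $U$ and $S$ is exactly what lets the two high-probability events be combined cleanly. A secondary, purely bookkeeping point is tracking the Lipschitz slack and the Hoeffding constants so that the net radii $\frac{\epsilon_0}{4L}$, $\frac{\epsilon_1}{4L}$ and the stated sample sizes emerge up to the absolute constant $C$ (the product form with $\ln\frac{1}{\delta}$ being a loose upper bound on the natural additive $\ln\frac{1}{\delta}$ term).
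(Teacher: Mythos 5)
Your proposal is correct and follows essentially the same route as the paper's proof: a covering-number uniform convergence bound for $L_r$ over $\mathcal{H}\times\mathcal{G}$ on $U$ (making $h^*$ empirically feasible and sandwiching the empirical feasible set inside $\mathcal{H}_{\mathcal{D}_X,L_r}(\epsilon_r+2\epsilon_0)$), a second uniform convergence bound for $L_c$ over $\mathcal{F}\times\mathcal{H}_{\mathcal{D}_X,L_r}(\epsilon_r+2\epsilon_0)$ on $S$, and the same five-step chaining to conclude. The only cosmetic difference is that the paper controls $L_c(f^*,h^*;S)$ by a separate Hoeffding bound for the fixed pair $(f^*,h^*)$, whereas you obtain it from the second-stage uniform convergence after noting $h^*\in\mathcal{H}_{\mathcal{D}_X,L_r}(\epsilon_r+2\epsilon_0)$; both are valid.
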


Theorem~\ref{thm:covering} shows that optimizing the prediction loss subject to the regularization loss bounded by $(\epsilon_r + \epsilon_0)$ can give a solution $f \circ h$ with prediction loss close to the optimal. The sample complexity bounds are broadly similar to those in the simpler realizable and finite hypothesis class setting, but with $\frac{1}{\epsilon_0}$ and $\frac{1}{\epsilon_1}$ replaced by $\frac{1}{\epsilon_0^2}$ and $\frac{1}{\epsilon_1^2}$ due to the unrealizability, and logarithms of the hypothesis class sizes replaced by their metric entropy due to the classes being infinite. 
We show a reduction of $\frac{C}{\epsilon_1^2} \left[ \ln\mathcal{N}_{\mathcal{H} }\left(\frac{\epsilon_1}{4L}\right) {-} \ln\mathcal{N}_{ \mathcal{H}_{\mathcal{D}_X, L_r}(\epsilon_r {+} 2\epsilon_0) }\left(\frac{\epsilon_1}{4L}\right)\right ]$ with the standard bound on $m_l$ without unlabeled data. 
Equivalently, the error bound is reduced by $\frac{C}{\sqrt{m_\ell}} \left[ \ln\mathcal{N}_{\mathcal{H} }\left(\frac{\epsilon_1}{4L}\right) {-}  \ln\mathcal{N}_{ \mathcal{H}_{\mathcal{D}_X, L_r}(\epsilon_r {+} 2\epsilon_0) }\left(\frac{\epsilon_1}{4L}\right) \right ]$.

We bring attention to some subtleties which are worth noting. Firstly, the regularization loss $\epsilon_r$ of $g^*, h^*$ need not be optimal; there may be other $g, h$ which get a smaller $L_r(h, g; \mathcal{D}_X)$ (even $\ll \epsilon_r$). Secondly, the prediction loss is bounded by $L_c(f^*, h^*; \mathcal{D}) + \epsilon_1$, which is independent of $\epsilon_r$. Similarly, the bounds on $m_u$ and $m_\ell$ mainly depend on $\epsilon_0$ and $\epsilon_1$ respectively, while only $m_\ell$ depends on $\epsilon_r$ through the $\mathcal{H}_{\mathcal{D}_X, L_r}(\epsilon_r {+} 2\epsilon_0)$ term. Thus, even when the regularization loss is large (e.g., the reconstruction of an auto-encoder is far from accurate), it is still possible to learn an accurate predictor with a significantly reduced labeled data size using the unlabeled data. This suggests that when designing an auxiliary task ($\mathcal{G}$ and $L_r$), it is \emph{not} necessary to ensure that the ``ground-truth'' $h^*$ has a small regularization loss. Rather, one should ensure that only a small fraction of $h \in \mathcal{H}$ have a smaller (or similar) regularization loss than $h^*$ so as to reduce the label sample complexity. 

This bound also shows that $\tau$ should be carefully chosen for the constraint $L_r(h; U) \le \tau$. With a very small $\tau$, the ground-truth $h^*$ (or hypotheses of similar quality) may not satisfy the constraint and become infeasible for learning. With a very large $\tau$, the auxiliary task may not reduce the labeled sample complexity. Practical learning algorithms typically turn this constrain into a regularization like term, i.e., by optimizing $L_c(f, h; S) + \lambda L_r(h; U)$. For such objectives, the requirement on $\tau$ translates to carefully choosing $\lambda$. When $\lambda$ is very large, this leads to a small $L_r(h; U)$ but a large $L_c(f, h; S)$, while when $\lambda$ is very small, this may not reduce  the labeled sample complexity.

\myparagraph{Different Domain, Unrealizable, Infinite Hypothesis Classes.}
In practice, the unlabeled data is often from a different domain than the labeled data. For example, state-of-the-art NLP systems are often pre-trained on a large general-purpose unlabeled corpus (e.g., the entire Wikipedia) while the target task has a small specific labeled corpus (e.g., a set of medical records). That is, the unlabeled data $U$ is from a distribution $\mathcal{U}_X$ different from $\mathcal{D}_X$. For this setting, we have the following bound: 

\begin{restatable}{thm}{diffdomain}
\label{thm:diff_domain}
Suppose the unlabeled data $U$ is from a distribution $\mathcal{U}_X$ different from $\mathcal{D}_X$. 
Suppose there exist $h^* \in \mathcal{H}, f^* \in \mathcal{F}, g^* \in \mathcal{G}$ such that  $L_c(f^*, h^*; \mathcal{D}) \le \epsilon_c$ and $L_r(h^*, g^*; \mathcal{U}_X) \le \epsilon_r$. Then the same sample complexity bounds as in Theorem~\ref{thm:covering} hold (replacing $\mathcal{D}_X$ with $\mathcal{U}_X$ in Equation~\ref{eqn:bound_unrealizable}).
\end{restatable}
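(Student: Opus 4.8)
The plan is to reduce Theorem~\ref{thm:diff_domain} to Theorem~\ref{thm:covering} by tracking exactly where the distribution $\mathcal{D}_X$ enters the proof of the latter, and arguing that in the different-domain setting the very same argument goes through verbatim once $\mathcal{D}_X$ is replaced by $\mathcal{U}_X$. The key structural observation is that the unlabeled data $U$ and the labeled data $S$ play logically separate roles in the analysis: $U$ is used only to control the regularization loss $L_r(h;\cdot)$ and thereby to prune the representation class, whereas $S$ is used only to control the prediction loss $L_c(f,h;\cdot)$. In the same-domain proof these two concentration arguments happen to reference the same marginal, but nothing couples them beyond the fact that the pruned subclass $\mathcal{H}_{\mathcal{D}_X,L_r}(\epsilon_r+2\epsilon_0)$ feeds into the bound on $m_l$. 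So the first thing I would do is isolate the two halves of the Theorem~\ref{thm:covering} proof as self-contained lemmas.

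\medskip
\noindent First I would handle the unlabeled half. Using the $m_u$ samples from $\mathcal{U}_X$, I would invoke a uniform convergence bound over the product class $\mathcal{H}\times\mathcal{G}$ (via the covering numbers $\mathcal{N}_{\mathcal H}(\epsilon_0/4L)$ and $\mathcal{N}_{\mathcal G}(\epsilon_0/4L)$ and the Lipschitz constant $L$) to guarantee that, with probability $1-\delta/2$, $|L_r(h,g;U)-L_r(h,g;\mathcal{U}_X)|\le \epsilon_0$ simultaneously for all $(h,g)$. This is precisely the statement proved for $\mathcal{D}_X$ in Theorem~\ref{thm:covering}, and since the concentration is entirely over draws from the unlabeled marginal, replacing $\mathcal{D}_X$ by $\mathcal{U}_X$ changes nothing in the argument. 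Two consequences follow: (i) taking the min over $g\in\mathcal{G}$ preserves the inequality, so $|L_r(h;U)-L_r(h;\mathcal{U}_X)|\le\epsilon_0$ uniformly; and (ii) the feasibility target $h^*$, which has $L_r(h^*;\mathcal{U}_X)\le L_r(h^*,g^*;\mathcal{U}_X)\le\epsilon_r$, satisfies the empirical constraint $L_r(h^*;U)\le\epsilon_r+\epsilon_0$, so the constrained program is feasible. Crucially, any learned $h$ with $L_r(h;U)\le\epsilon_r+\epsilon_0$ then has $L_r(h;\mathcal{U}_X)\le\epsilon_r+2\epsilon_0$, placing it in the pruned class $\mathcal{H}_{\mathcal{U}_X,L_r}(\epsilon_r+2\epsilon_0)$.

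\medskip
\noindent Next I would handle the labeled half. Here the only input from the unlabeled analysis is the \emph{identity of the effective hypothesis class}: the learned $h$ lives in $\mathcal{H}_{\mathcal{U}_X,L_r}(\epsilon_r+2\epsilon_0)$ rather than all of $\mathcal{H}$. Conditioned on the good event above, I would apply standard uniform convergence over $\mathcal{F}\times\mathcal{H}_{\mathcal{U}_X,L_r}(\epsilon_r+2\epsilon_0)$ for the prediction loss using the $m_l$ labeled samples from $\mathcal{D}$, giving $|L_c(f,h;S)-L_c(f,h;\mathcal{D})|\le\epsilon_1/2$ uniformly with probability $1-\delta/2$. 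Combining with $L_c(f^*,h^*;\mathcal{D})\le\epsilon_c$ and the optimality of the learned $(f,h)$ on $S$ yields $L_c(f,h;\mathcal{D})\le\epsilon_c+\epsilon_1$ by the usual empirical-risk-minimization comparison chain. A union bound over the two events of probability $\delta/2$ closes the argument. Note that the labeled concentration is, and must remain, with respect to $\mathcal{D}$ — the target distribution does not change — so the only substitution anywhere is $\mathcal{D}_X\to\mathcal{U}_X$ in the regularization covering-number term, exactly as the statement asserts.

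\medskip
\noindent The step I expect to require the most care is the interface between the two halves, specifically verifying that $h^*$ simultaneously (a) remains feasible for the empirical regularization constraint and (b) lies in the pruned class used to bound $m_l$, even though $h^*$ is defined by its behavior on $\mathcal{U}_X$ while its prediction quality is measured on $\mathcal{D}$. In the different-domain setting one must resist the temptation to assume any relationship between $\mathcal{U}_X$ and $\mathcal{D}_X$; the argument works precisely because the pruned class is now indexed by $\mathcal{U}_X$, so the $h$ selected by the unlabeled constraint is guaranteed membership in the \emph{same} class whose covering number appears in the labeled bound. Provided the bookkeeping keeps the pruning distribution and the prediction distribution strictly separate, the proof of Theorem~\ref{thm:covering} transfers with no genuinely new ideas — which is exactly why the theorem can be stated as ``the same bounds hold.''
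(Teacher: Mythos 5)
Your proposal is correct and follows essentially the same route as the paper's proof: concentrate the regularization loss over $\mathcal{H}\times\mathcal{G}$ with respect to $\mathcal{U}_X$ (so feasibility of $h^*$ and membership of the learned $h$ in $\mathcal{H}_{\mathcal{U}_X,L_r}(\epsilon_r+2\epsilon_0)$ both hold), then run the labeled-data uniform convergence and ERM comparison over $\mathcal{F}\times\mathcal{H}_{\mathcal{U}_X,L_r}(\epsilon_r+2\epsilon_0)$ with respect to $\mathcal{D}$, exactly as in Theorem~\ref{thm:covering}. The only cosmetic difference is bookkeeping: the paper splits the failure probability as $\delta/4+\delta/2+\delta/4$ (one-sided uniform convergence plus a separate Hoeffding bound for the comparator pair $(f^*,h^*)$), whereas you absorb the comparator into a two-sided uniform bound with a $\delta/2+\delta/2$ split, which is equally valid since $h^*$ lies in the pruned class.
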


The bound is similar to that in the setting of the domain distributions being same. 
It implies that unlabeled data from a domain different than the labeled data, can help in learning the target task, as long as there exists a ``ground-truth'' representation function $h^*$, which is shared across the two domains, having a small prediction loss on the labeled data and a suitable regularization loss on the unlabeled data. The former (small prediction loss) is typically assumed according to domain knowledge, e.g., 
for image data, common visual perception features are believed to be shared across different types of images. The latter (suitable regularization loss) means only a small fraction of $h {\in} \mathcal{H}$ have a smaller (or similar) regularization loss than $h^*$, which requires a careful design of $\mathcal{G}$ and $L_r$.

\subsection{Discussions}

\paragraph{When is functional regularization not helpful?}
In addition to demonstrating the benefits of unlabeled data for the target prediction task, our theorems and analysis also provide implications for cases when the auxiliary self-supervised task may \emph{not} help the target prediction task. 
Firstly, the regularization may not lead to a significant reduction in the size of the hypothesis class. For example, consider Theorem 1, if $\mathcal{H}_{\mathcal{D}_X, L_r}(\epsilon_0)$ is not significantly smaller than $\mathcal{H}$, then using unlabeled data will not reduce the sample size of the labeled data by much when compared to the case of only using labeled data for prediction. In fact, to get significant gain in sample complexity reduction, the size of the regularized hypothesis class $\mathcal{H}_{\mathcal{D}_X, L_r}(\epsilon_0)$ needs to be exponentially smaller than entire class $\mathcal{H}$. A polynomially smaller $\mathcal{H}_{\mathcal{D}_X, L_r}(\epsilon_0)$ only leads to minor logarithmic reduction in the sample complexity. Section~\ref{sec:example} presents two concrete examples where the regularized hypothesis class is exponentially smaller than $\mathcal{H}$ thereby showing benefits of using functional regularization, but this can also help to identify examples where the auxiliary task does not aid learning. 
Secondly, the auxiliary task can fail if the regularization loss threshold ($\tau$ in Equation~(\ref{eqn:learning_problem})) is not properly set. For example, if $\tau$ is set too small, then the feasible set ($\mathcal{H}_{\mathcal{D}_X, L_r}(\tau)$) may contain no hypotheses with a small prediction loss.
Lastly, another possible reason that these representation learning approaches may fail is the inability of the optimization to lead to a good solution. Analyzing the effects of optimization for function regularization is an interesting future direction. 

\paragraph{Is uniform convergence suitable for our analysis?}
Our sample complexity analysis is based on uniform convergence bounds. A careful reader may question whether uniform convergence is suitable for analyzing the generalization in the first place, since there is evidence~\cite{zhang2016understanding,nagarajan2019uniform} that na\"ively applying uniform convergence bounds may not result in good generalization/sample bounds for deep learning. 
However, these existing studies~\cite{zhang2016understanding,nagarajan2019uniform} cannot be directly applied to our problem setting. To the best of our knowledge, they apply for supervised learning tasks without any auxiliary representation learning on unlabeled data, which differs from our setting of using auxiliary tasks on unlabeled data. 
This difference in problem settings is the key in making uniform convergence bounds meaningful. More precisely, in supervised deep learning without auxiliary tasks, it is generally believed that the hypothesis class is larger than statistically necessary, and the optimization has an implicit regularization while training, and hence uniform convergence fails to explain the generalization (e.g.,~\cite{neyshabur2014search,li2018algorithmic}). However, in our setting with the auxiliary tasks, functional regularization has a regularization effect of restricting the learning to a smaller subset of the hypothesis space, as shown by our analysis and supported by empirical evidence in existing works (e.g.,~\cite{10.5555/1756006.1756025}) and our experiments in Section~\ref{sec:exp} and Appendix~\ref{app:additional_expts}. Once regularized to a smaller subset of hypotheses, the implicit regularization of the optimization is no longer significant, and thus the generalization can be explained by uniform convergence. A more thorough investigation is left as future work.

\section{Applying the Theoretical Framework to Concrete Examples} \label{sec:example}

The analysis in Section~\ref{sec:framework} shows that the sample complexity bound reduction depends on the notion of the pruned subset $\mathcal{H}_{\mathcal{D}_X, L_r}$, which captures the effect of the regularization function and the property of the unlabeled data distribution. 
Our generic framework can be applied to various concrete configurations of hypothesis classes and data distributions. This way we can quantify the reduction more explicitly by investigating $\mathcal{H}_{\mathcal{D}_X, L_r}$. We provide two such examples: one using an auto-encoder regularization and the other using a masked self-supervision regularization.
We outline how to bound the sample complexity for these examples, and present the complete details and proofs in Appendix~\ref{app:example}.

\subsection{An Example of Functional Regularization via Auto-encoder}
\label{subsec:vae_example}

\myparagraph{Learning Without Functional Regularization.}
Consider $\mathcal{H}$ to be the class of linear functions from $\mathbb{R}^d$ to $\mathbb{R}^r$ where ${r < d/2}$, and $\mathcal{F}$ to be the class of linear functions over some activations. That is,
\begin{align}
\label{eqn:auto-encoder}
\phi = h_W(x) = W x, ~~y = f_a(\phi) = \sum_{i=1}^r a_i \sigma(\phi_i) \ , \textrm{~where~} W \in \mathbb{R}^{r \times d}, ~~ a \in \mathbb{R}^{r}
\end{align}
\setlength{\belowdisplayskip}{3pt}
Here $\sigma(t)$ is an activation function (e.g., $\sigma(t) {=} t^2$), the rows of $W$ and $a$ have $\ell_2$ norms bounded by 1. We consider the Mean Square Error prediction loss, i.e., $L_c(f, h; x) {=} \|y {-} f(h(x))\|^2_2$. Without prior knowledge on data, no functional regularization corresponds to end-to-end training on $\mathcal{F} {\circ} \mathcal{H}$. 

\myparagraph{Data Property.}
We consider a setting where the data has properties which allows functional regularization. 
We assume that the data consists of a signal and noise, where the signal lies in a certain $r$-dimensional subspace. Formally, let columns of $B \in \mathbb{R}^{d\times d}$ be eigenvectors of $\Sigma {:=}\mathbb{E}[xx^\top]$, then the prediction labels are largely determined by the signal in the first $r$ directions: $y {=}  \sum_{i=1}^r a^*_i \sigma(\phi^*_i) {+} \nu$
and $\phi^* {=} B_{1:r}^\top x$, where $a^* \in \mathbb{R}^r$ is a ground-truth parameter with $\|a^*\|_2 {\le} 1$, $B_{1:r}$ is the set of first $r$ eigenvectors of $\Sigma$, and $\nu$ is a small Gaussian noise. We assume a difference in the $r^{\text{th}}$ and ${r{+}1}^{\text{th}}$ eigenvalues of $\Sigma$ to distinguish the corresponding eigenvectors.
Let $\epsilon_r$ denote $\mathbb{E}\|x {-} B_{1:r} B_{1:r}^\top x  \|_2^2$. 

\myparagraph{Learning With Functional Regularization.}
Knowing that the signal lies in an $r$-dimensional subspace, we can perform auto-encoder functional regularization.
Let $\mathcal{G}$ be a class of linear functions from $\mathbb{R}^r$ to $\mathbb{R}^d$, i.e., $\hat{x} {=} g_V(\phi) {=} V\phi$ where $V {\in} \mathcal{R}^{d \times r}$ has orthonormal columns. The regularization loss 
$L_r(h, g; x) {=} \|x {-} g(h(x))\|^2_2$. For simplicity, we assume access to infinite unlabeled data.

Without regularization, the standard $\epsilon$-covering argument shows that the labeled sample complexity, for an error $ \epsilon \ $ close to the optimal, is $\frac{C}{\epsilon^2} \left[ \ln \mathcal{N}_{\mathcal{F}}\left(\frac{\epsilon}{4L}\right)  + \ln \mathcal{N}_{\mathcal{H}}\left(\frac{\epsilon}{4L}\right) \right]$ for some absolute constant $C$. 
Applying our framework when using regularization with $\tau=\epsilon_r$, the sample complexity is bounded by $\frac{C}{\epsilon^2} \left[ \ln \mathcal{N}_{\mathcal{F}}\left(\frac{\epsilon}{4L}\right) + \ln\mathcal{N}_{ \mathcal{H}_{\mathcal{D}_X, L_r}(\epsilon_r)}\left(\frac{\epsilon}{4L}\right) \right]$.
Then we show that
$
    \mathcal{N}_{ \mathcal{H}}\left(\frac{\epsilon}{4L}\right) {\ge} {d - r \choose r} \mathcal{N}_{  \mathcal{H}_{\mathcal{D}_X, L_r}(\epsilon_r) }\left(\frac{\epsilon}{4L}\right)
$ (Proof in Lemma~\ref{lem:auto-encoder} of Appendix~\ref{app:example_auto})
since
\begin{align*}
    \mathcal{H}_{\mathcal{D}_X, L_r}(\epsilon_r) & = \{
    h_W(x): W = OB_{1:r}^\top, O \in \mathbb{R}^{r \times r}, O \textrm{~is orthonormal}\},
    \\
    \mathcal{H} & \supseteq \{h_W(x): W=O B_S^\top, O \in \mathbb{R}^{r \times r}, O \textrm{~is orthonormal}, S \subseteq \{r+1, \ldots, d\}, |S|{=}r\},
\end{align*}
where $B_S$ refers to the sub-matrix of columns in $B$ having indices in $S$. Therefore, the label sample complexity bound is reduced by $\frac{C}{\epsilon^2}\ln {d - r\choose r}$, i.e.,
the error bound is reduced by  $\frac{C}{\sqrt{m_\ell}}\ln {d -r \choose r}$ when using $m_\ell$ labeled points.
Note that $\ln {d - r \choose r} = \Theta(r \ln (d/k))$ when $r$ is small, and thus the reduction is roughly linear initially and then grows slower with $r$. 
Interestingly, the reduction depends on the hidden dimension $r$ but has little dependence on the input dimension $d$. 

\subsection{An Example of Functional Regularization via Masked Self-supervision}
\label{subsec:masked_example}
 
\myparagraph{Learning Without Functional Regularization.}
Let $\mathcal{H}$ be linear functions from $\mathbb{R}^d$ to $\mathbb{R}^r$ where $r < (d-1)/2 $ followed by a quadratic activation, and $\mathcal{F}$ be linear functions from $\mathbb{R}^r$ to $\mathbb{R}$. That is,
\begin{align}
\label{eqn:masked}
\phi = h_W(x) = [\sigma(w_1^\top x), \ldots, \sigma(w_r^\top x)] \in \mathbb{R}^r \ ,\ y = f_a(\phi) = a^\top \phi, \textrm{~where~} w_i\in \mathbb{R}^{d} \ ,  a\in \mathbb{R}^{r}.
\end{align}
Here $\sigma(t) {=} t^2$ for $t \in \mathbb{R}$ is the quadratic activation function.
W.l.o.g, we assume that $w_i$ and $a$ have $\ell_2$ norm bounded by $1$. 
Without prior knowledge on the data, no functional regularization corresponds to end-to-end training on $\mathcal{F} {\circ} \mathcal{H}$. 

\myparagraph{Data Property.}
We consider the setting where the data point $x$ satisfies $x_1 = \sum_{i=1}^r ((u_i^*)^\top x_{2:d})^2$, where $x_{2:d} = [x_2, x_3, \ldots, x_d]$ and $u_i^*$ is the $i$-th eigenvector of $\Sigma := \mathbb{E}[x_{2:d} x_{2:d}^\top]$. Furthermore, the label $y$ is given by $y = \sum_{i=1}^r a^*_i \sigma((u_i^*)^\top x_{2:d}) + \nu$ for some $\|a^*\|_2 {\le} 1$ and a small Gaussian noise $\nu$.
We also assume a difference in the ${r}^{\text{th}}$ and ${r{+}1}^{\text{th}}$ eigenvalues of $\Sigma$.

\myparagraph{Learning With Functional Regularization.}
Suppose we have prior knowledge that $x_1 = \sum_{i=1}^r (u_i^\top x_{2:d})^2$ and $y = \sum_{i=1}^r a_i \sigma(u_i^\top x_{2:d})$ for some vectors $u_i \in \mathbb{R}^{d-1}$ and an $a$ with $\|a\|_2 \le 1$. Based on this, we perform masked self-supervision by constraining the first coordinate of $w_i$ to be $0$ for $h$, and choosing the regularization function $g(\phi) {=} \sum_{i=1}^r \phi_i$ and the regularization loss $L_r(h, g; x) {=} (x_1 {-} g(h_W(x)))^2$. Again for simplicity, we assume access to infinite unlabeled data and set the regularization loss threshold $\tau=0$.

On applying our framework, we get that functional regularization can reduce the labeled sample bound by $\frac{C}{\epsilon^2} \left[ \ln \mathcal{N}_{ \mathcal{H}}\left(\frac{\epsilon}{4L}\right) {-} \ln \mathcal{N}_{  \mathcal{H}_{\mathcal{D}_X, L_r}(0)}\left(\frac{\epsilon}{4L} \right)\right]$ for some absolute constant $C$.  
We can derive the following using properties of $L_r$ and $g$ (Proof in Lemma~\ref{lem:masked} of Appendix~\ref{app:example_masked}):
\begin{align*}
    \mathcal{H}_{\mathcal{D}_X, L_r}(0) {=} \{
    h_W(x): w_i {=} [0,u_i], [u_1{, \ldots,} u_r]^\top {=} O [u_1^*, \ldots, u_r^*]^\top, O \in \mathbb{R}^{r \times r}, O \textrm{~is orthonormal}\}
\end{align*}
Using this we can show that the reduction of the sample bound is $\frac{C}{\epsilon^2}\ln {d-1 - r \choose r}$, i.e., a reduction in the error bound by  $\frac{C}{\sqrt{m_\ell}} \ln {d-1 - r \choose r} $ when using $m_\ell$ labeled data. We also note that this reduction depends on $r$ but has little dependence on $d$.

\section{Experiments} \label{sec:exp}

There is abundant empirical evidence on the benefits of auxiliary tasks in various applications, and hence we present experiments on verifying the benefits of functional regularization in Appendix~\ref{app:additional_expts}. Here we focus on experimentally verifying the following implications for the two concrete examples that we have analysed under our framework: 1) the reduction in prediction error (between end-to-end training and functional regularization) using the same amount of labeled data; 2) the reduction in prediction error on varying a property of the data and hypotheses (specifically, varying parameter $r$); 3) the pruning of the hypothesis class which results in reducing the prediction error. We present the complete experimental details in Appendix~\ref{app:experiments} for reproducibility, and additional results which verify that the reduction has little dependence on the input dimension $d$.  

\myparagraph{Setup.}
For the auto-encoder example, we randomly generate $d$ orthonormal vectors ($\{u^*_i\}_{i=1}^{i=d}$) in $\mathbb{R}^{d}$, means $\mu_i$ and variances $\sigma_i$ for $i\in[d]$ such that $\sigma_1 {> \dots >} \sigma_r \gg \sigma_{r+1} {> \dots >} \sigma_d$. We sample $\alpha_i {\sim} \mathcal{N}(\mu_i,\sigma_i) \ \forall i\in[d]$ and generate $x=\sum_{i=1}^{d} \alpha_i u_i$. For generating $y$, we use a randomly generated vector $a^* \in \mathbb{R}^{r}$. We use $d=100$ and generate $10^4$ unlabeled, $10^4$ labeled training and $10^3$ labeled test points. We use the quadratic activation function and follow the specification in Section~\ref{subsec:vae_example} (with regards to the hypothesis classes, reconstruction losses, etc.). For the masked self-supervision example, we similarly generate $x$ having the data property specified in Section~\ref{subsec:masked_example} and then follow the other specifications in Section~\ref{subsec:masked_example} (with regards to the hypothesis classes, reconstruction losses, etc.). We report the MSE on the test data points averaged over 10 runs as the metric.

To support our key theoretical intuition that functional regularization prunes the hypothesis classes, we seek to visualize the learned model. Since multiple permutations of model parameters can result in the same model behavior, we visualise the function (from input to output) represented instead of the parameters using the method from~\cite{10.5555/1756006.1756025}. Formally, we concatenate the outputs $f(h(x))$ on the test set points from a trained model into a single vector and visualise the vector in 2D using t-SNE~\cite{vanDerMaaten2008}. We perform 1000 independent runs for each of the two models (with and without functional regularization) and plot the vectors for comparison. See Appendix~\ref{app:auto_encoder_experiments} for more details.

\begin{figure}[t]
\centering
\subfigure[]{\label{fig:1}\includegraphics[width=.32\linewidth]{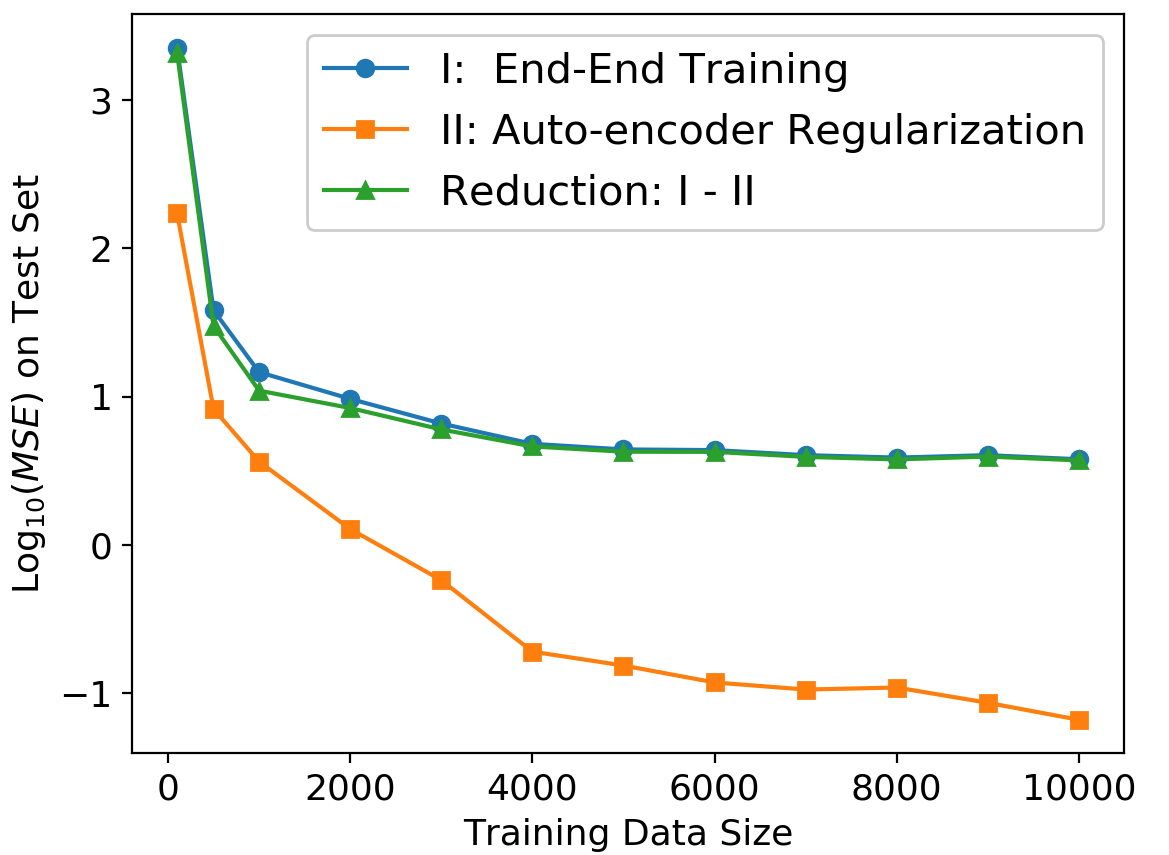}}
\subfigure[]{\label{fig:2}\includegraphics[width=.32\linewidth]{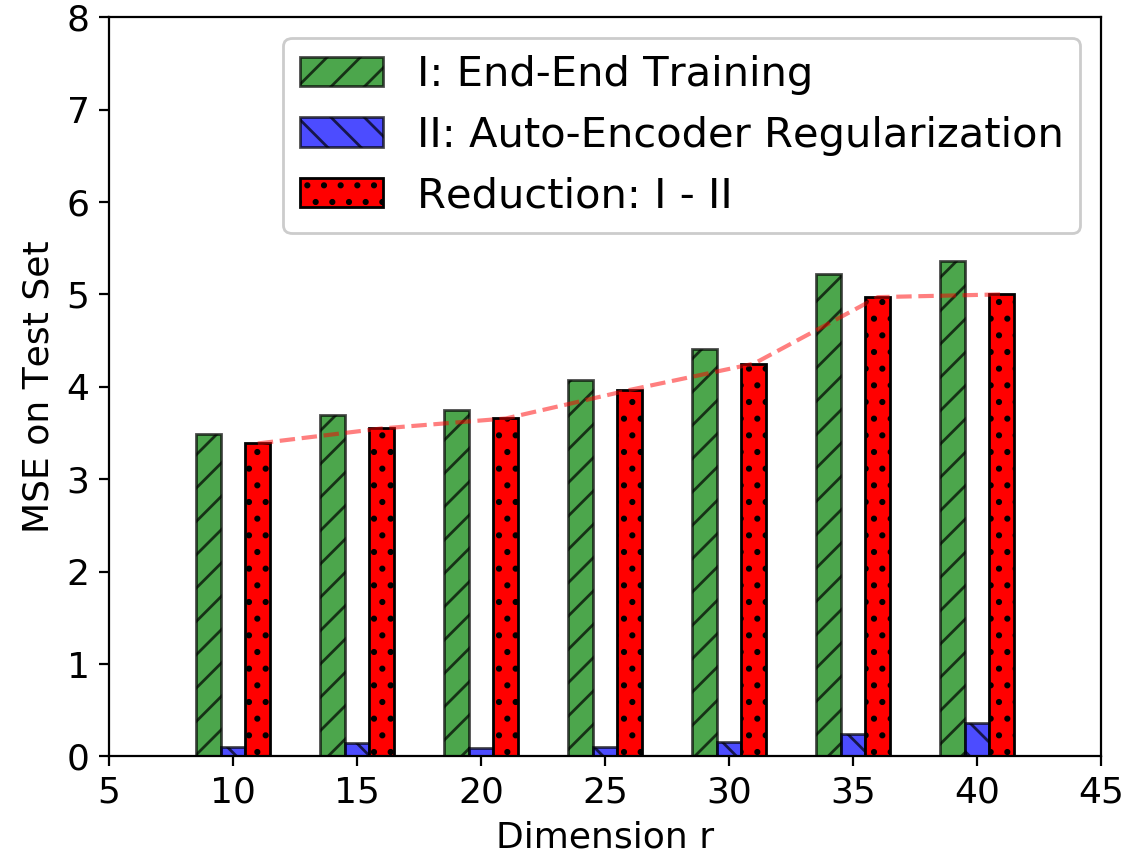}}
\subfigure[]{\label{fig:3}\includegraphics[height=0.23\linewidth, width=.32\linewidth]{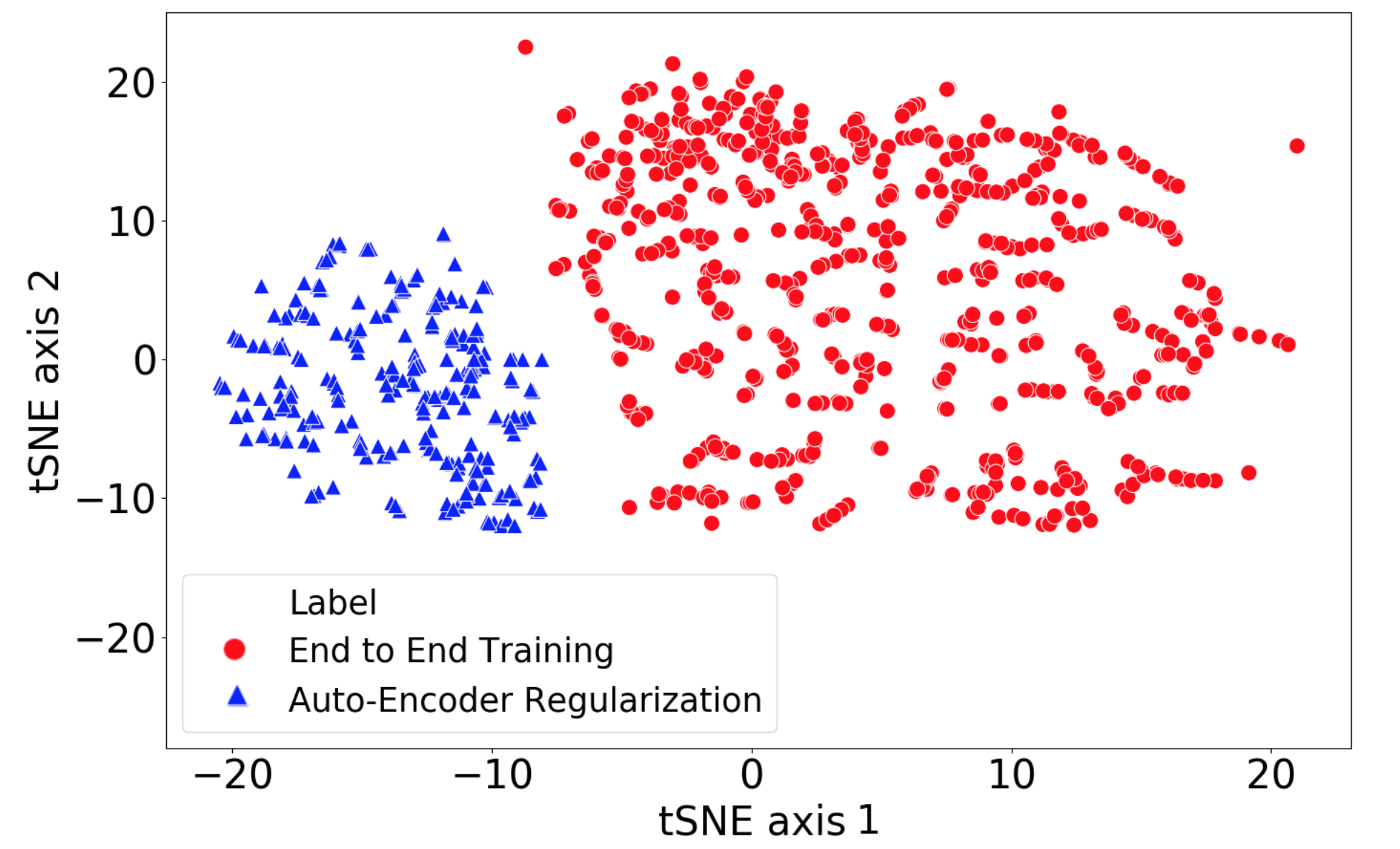}} \\
\vspace{-1em}
\subfigure[]{\label{fig:4}\includegraphics[width=.32\linewidth]{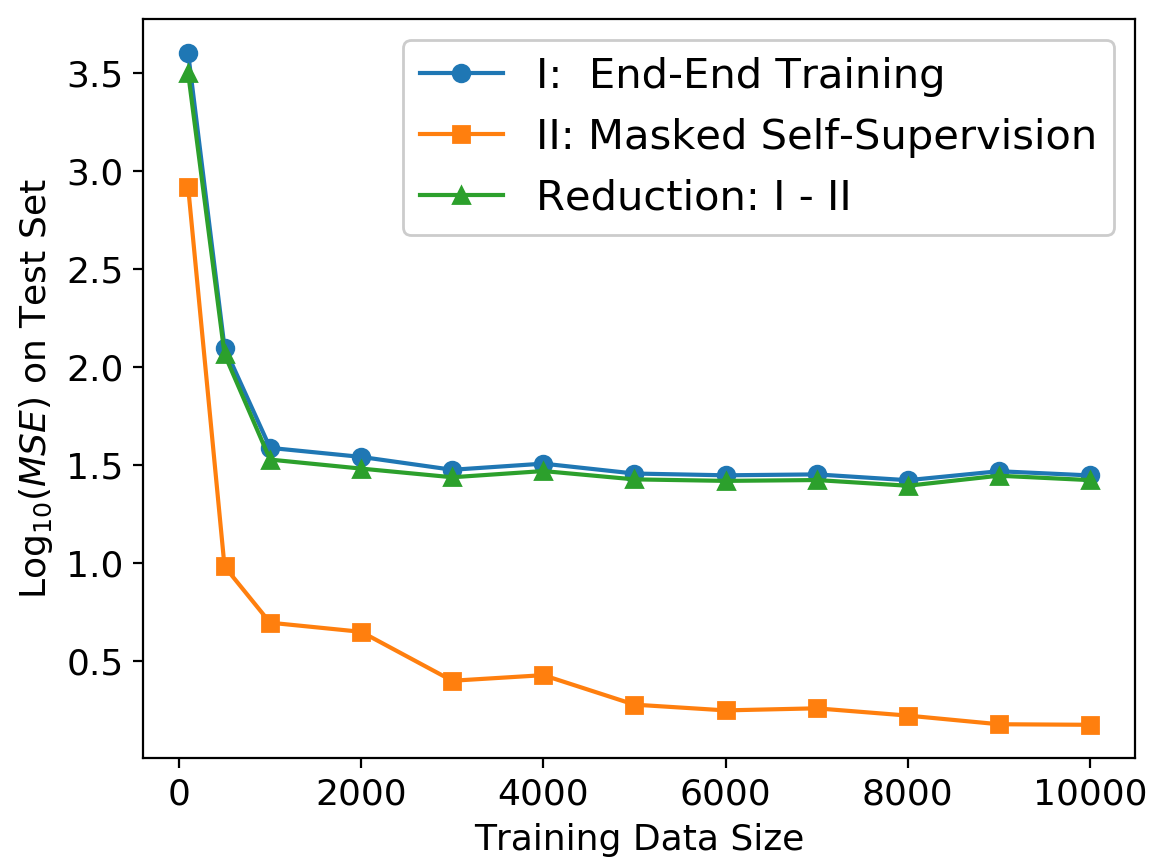}}
\subfigure[]{\label{fig:5}\includegraphics[width=.32\linewidth]{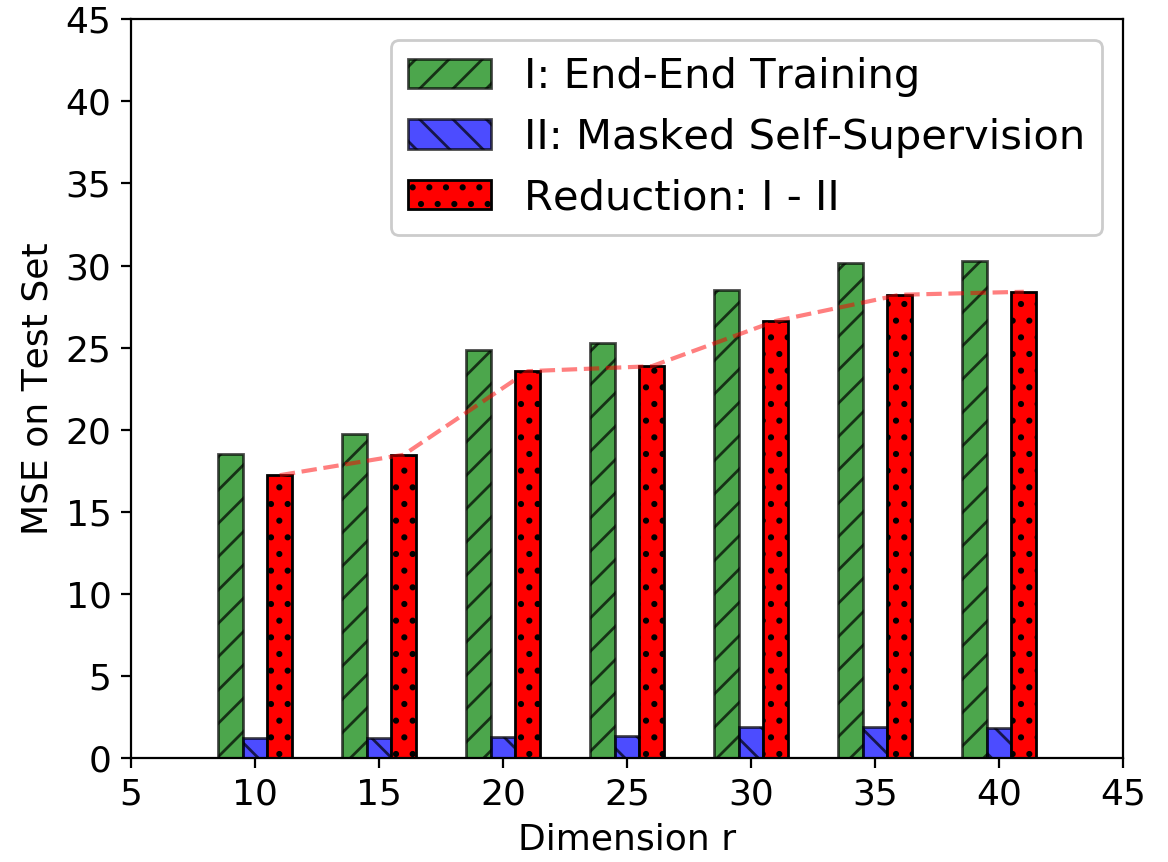}}
\subfigure[]{\label{fig:6}\includegraphics[height=0.23\linewidth, width=.32\linewidth]{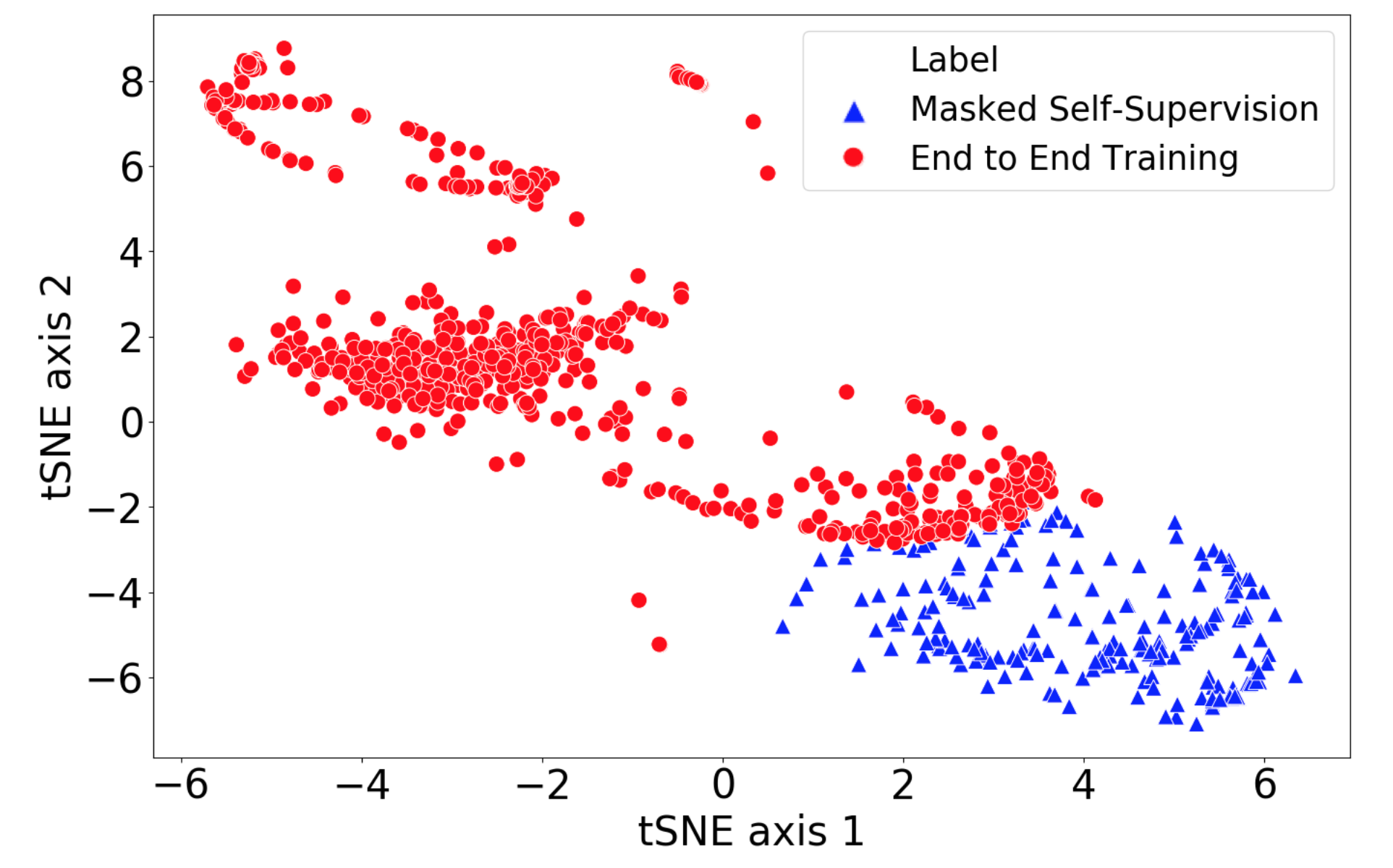}} \\
\vspace{-1em}
\caption{\textbf{Auto-Encoder:} \ref{fig:1} shows the variation of test MSE with labeled data size (here $r$=30), \ref{fig:2} shows this variation with the parameter $r$, and \ref{fig:3} shows the 2D visualization of the functional approximation using t-SNE. \textbf{Masked Self-Supervision:} \ref{fig:4}, \ref{fig:5} and \ref{fig:6} show the same corresponding plots. Reduction refers to Test MSE of end-to-end training - Test MSE with regularization.}
\end{figure} 

\myparagraph{Results.}
Figure~\ref{fig:1} plots the Test MSE loss by varying the size of the labeled data when $r=30$. We observe that with the same labeled data size, functional regularization can significantly reduce the error compared to end-to-end training. Equivalently, it needs much fewer labeled samples to achieve the same error as end-to-end training (e.g., 500 v.s.\ 10,000 points). 
Also, the error without regularization does not decrease for sample sizes $\ge 2000$ while it decreases with regularization, suggesting that the regularization can even help alleviate optimization difficulty.
Figure~\ref{fig:2} shows the effect of varying $r$ (i.e., the dimension of the subspace containing signals for prediction). We observe that the reduction in the error increases roughly linearly with $r$ and then grows slower, as predicted by our analysis.
Figure~\ref{fig:3} visualizes the prediction functions learned. It shows that when using the functional regularization, the learned functions stay in a small functional space, while they are scattered otherwise. This supports the intuition behind our theoretical analysis. This result also interestingly suggests that pruning the representation hypothesis space via functional regularization translates to a compact functional space for the prediction, even through optimization.
We make similar observations for the masked self-supervision example in Figure~\ref{fig:4}-\ref{fig:6}, which provide additional support for our analysis.

\section{Conclusion}
In this paper we have presented a unified discriminative framework for analyzing several representation learning approaches using unlabeled data, by viewing them as imposing a regularization on the representation via a learnable function. We have derived sample complexity bounds under various assumptions on the hypothesis classes and data, and shown that the functional regularization can be used to prune the hypothesis class and reduce the labeled sample complexity. We have also applied our framework to two concrete examples. An interesting future work direction is to investigate the effect of such functional regularization on the optimization of the learning methods. 

\section*{Broader Impact}
Our paper is mostly theoretical in nature and thus we foresee no immediate negative societal impact. We are of the opinion that our theoretical framework may inspire development of improved representation learning methods on unlabeled data, which may have a positive impact in practice. In addition to the theoretical machine learning community, we perceive that our precise and easy-to-read formulation of unsupervised learning for downstream tasks can be highly beneficial to engineering-inclined machine learning researchers.

\section*{Funding Disclosure}
The work in this paper was funded in part by FA9550-18-1-0166 and IIS-2008559. The authors would also like to acknowledge the support provided by the University of Wisconsin-Madison Office of the Vice Chancellor for Research and Graduate Education with funding from the Wisconsin Alumni Research Foundation. No other funding entities with any competing interests influenced our work.

\section*{Acknowledgements}
The authors would like to thank the anonymous reviewers and the meta-reviewer for their valuable comments and suggestions which have been incorporated for the camera ready version.

\bibliographystyle{abbrv}
\bibliography{references}

\clearpage
\appendix
\begin{center}
    \textbf{\LARGE Appendix}
\end{center}

\section{Instantiations of Functional Regularization} \label{app:instance}
Here we show that several unsupervised (self-supervised) representation learning strategies can be viewed as imposing a learnable function to regularize the representations being learned.
We note that the class $\mathcal{G}$ can be an index set instead of a class of functions; our framework applies as long as the loss $L_r(h,g;x)$ is well defined (see the manifold learning example). $\mathcal{G}$ can also only have a single $g$, corresponding to the special case of a fixed regularizer (see the $\ell_p$ norm penalty example). 

\paragraph{Auto-encoder.}
Auto-encoders use an encoder function $h$ to map the input $x$ to a lower dimensional space $\phi$ and a decoder network $d$ to reconstruct the input back from $\phi$ using a MSE loss $\|x-d(h(x))\|^2$. One can view $d$ as a regularizer on the feature representation $\phi=h(x)$ through the regularization loss $L_r(h,g\ ; x)=\|x-d(h(x))\|^2$. $\mathcal{H}_{\mathcal{D}_X, L_r}(\tau)$ is the subset of representation functions with at most $\tau$ reconstruction error using the best decoder in $\mathcal{G}$.

Variants of standard auto-encoders like sparse auto-encoders can be formulated similarly as a functional regularization on the representation being learnt. 

\paragraph{Masked Self-supervised Learning.}
Masked self-supervision techniques, in abstract terms, cover a portion of the input and then predict the masked input portion~\cite{devlin-etal-2019-bert}. More concretely, say the input $x=[x_1, x_2 ,\dots,x_d]$ is masked as $x'=[x_1,\dots,x_i,0,\dots,0,x_j,\dots,x_d]$ and a function $g$ is learned to predict the masked input $[x_{i+1},\dots,x_{j-1}]$ over an input representation $h(x)$. This function $g$ used to reconstruct $x$, can be viewed as imposing a regularization on $h$ through a MSE regularization loss given by $\|x_{[i+1:j-1]} - g(h(x'))\|^2$. $\mathcal{H}_{\mathcal{D}_X, L_r}(\tau)$ is the subset of $\mathcal{H}$ which have at most $\tau$ MSE on predicting $x_{[i+1:j-1]}$ using the best function $g \in G$.

\paragraph{Variational Auto-encoder.} VAEs encode the input $x$ as a distribution $q_{\phi}(z|x)$ over a parametric latent space $z$ instead of a single point, and sample from it to reconstruct $x$ using a decoder $p_{\theta}(x|z)$. The encoder $q_{\phi}(z|x)$ is used to model the underlying mean $\mu_z$ and co-variance matrix $\sigma_z$ of the distribution over $z$. VAEs are trained by minimising a loss
$$
\mathcal{L}_{x}(\theta,\phi) = -\mathbb{E}_{z \sim q_{\phi}(z|x)}[\log \ p_{\theta}(x|z)] + \mathbb{KL}(q_{\phi}(z|x)||p(z))
$$ 
where $p(z)$ is specified as the prior distribution over $z$ (e.g., $\mathcal{N}(0,1)$). The encoder $q_{\phi}(z|x)$ can be viewed as the representation function $h$, the decoder $p_{\theta}(x|z)$ as the learnable regularization function $g$, and the loss $\mathcal{L}_{x}(\theta,\phi)$ as the regularization loss $L_r(h, g; x)$ in our framework. Then $\mathcal{H}_{\mathcal{D}_X, L_r}(\tau)$ is the subset of encoders $q_{\phi}$ which have at most $\tau$ VAE loss when using the best decoder $p_\theta$ for it.

\paragraph{Manifold Learning through the Triplet Loss.} Learning manifold representations through metric learning is a popular technique used in computer vision applications~\cite{journals/corr/HofferA14}. A triplet loss formulation is used to learn a distance metric for the representations, by trying to minimise this metric between a baseline and positive sample and maximising the metric between the baseline and a negative sample. This is achieved by learning a representation function $h$ for an input $x$. Considering a triple of input samples $\bar{x} = (x_b,x_p,x_n)$ corresponding to a baseline, positive and negative sample, we use a loss $L_\mathrm{Triplet}(\bar{x})=\max(\|h(x_b)-h(x_p)\|_2^2 - \|h(x_b)-h(x_n)\|_2^2,0)$ to learn $h$. This is a special instantiation of our framework using a dummy $\mathcal{G}$ having a single function $g$, where the regularization loss $L_r(h,g\ ; \bar{x})=L_\mathrm{Triplet}(\bar{x})$ is computed over a triple of input samples.

Further, one can also consider some variants of the standard triplet loss formulation under our functional regularization perspective. 
For example, let the triplet loss be $L^{(\alpha)}_\mathrm{Triplet}(\bar{x}) {=} \max(\|h(x_b){-}h(x_p)\|_2^2 {-} \|h(x_b){-}h(x_n)\|_2^2 {+} \alpha,0)$ where $\alpha \in \mathbb{R}$ is a margin between the positive and negative pairs. When $\alpha$ is learnable, this corresponds to a functional regularization where $\mathcal{G} = \{\alpha: \alpha \in \mathbb{R}\}$, and the regularization loss is $L_r(h,g; \bar{x}) = L^{(\alpha)}_\mathrm{Triplet}(\bar{x})$. In this case, the class $\mathcal{G}$ is not defined on top of the representation $h(x)$. However, our framework and the sample complexity analysis can still be applied through the definition of $L_r(h,g; \bar{x})$. 

\paragraph{Sparse Dictionary Learning.}
Sparse dictionary learning is an unsupervised learning approach to obtain a sparse low-dimensional representation of the input data.
Here we consider a distributional view of sparse dictionary learning.
Give a distribution $\mathcal{D}_X$ over unlabeled data $x \in \mathbb{R}^d$ and a hyper-parameter $\lambda > 0$, we want to find a dictionary matrix $D \in \mathbb{R}^{d \times K}$ and a sparse representation $z \in \mathbb{R}^K$ for each $x$, so as to minimize the error $\mathbb{E}[L_{D}(x)]$, where $L_{D}(x)$ is the error on one point $x$ defined as $L_{D}(x) := \|x -D z \|_{2}^{2}+\lambda \|z\|_{0}$, subject to the constraint that each column of $D$ has $\ell_2$ norm bounded by $1$. The learned representations $z$ can then be used for a target prediction task.
Under our framework, we can view the representation function corresponding to $z = h_D(x) = \arg\min_{z \in \mathbb{R}^K} \|x -D z \|_{2}^{2}+\lambda \|z\|_{0}$, and $D$ is the parameter of the representation function. The regularization function class $\mathcal{G} $ has a single $g$, and the regularization loss is $L_r(h_D, g ; x) = L_{D}(x)$.

Our framework also captures an interesting variant of dictionary learning. Consider another dictionary matrix $F$ and a hyper-parameter $\eta > 0$. The representation function still corresponds to $z = h_D(x) = \arg\min_{z \in \mathbb{R}^K} \|x -D z \|_{2}^{2}+\lambda \|z\|_{0}$, with $D$ as the parameter. The regularization function class is now given by $\mathcal{G} = \{g_F(z) = Fz : F \in \mathbb{R}^{d \times K}\}$, and the regularization loss $L_r(h_D, g_F; x)$ is defined as $ \|x - g_F( h_D(x) ) \|_{2}^{2}+\lambda \|z\|_{0} + \eta \| D - F\|^2_F$. This special case of dictionary learning allows the encoding and decoding steps to use two different dictionaries $D$ and $E$ but constraining the difference between them. When $\eta \rightarrow +\infty$, this variant reduces to the original version described earlier.

\paragraph{Explicit $\ell_p$ Norm Penalty.}
Techniques imposing explicit regularizations on the representation $h$ being learned, often use an $\ell_p$ norm penalty on $h(x)$ i.e, $\|h(x)\|_p^{p}$ to the prediction loss while jointly training $f$ and $h$. This can be viewed as a special case of our framework using a fixed regularization function $g(h(x))=\|h(x)\|_p^{p}$.

\paragraph{Restricted Boltzmann Machines.} Restricted Boltzmann Machines (RBM)~\cite{smolensky1986information,hinton2006reducing} generate hidden representations for an input through unsupervised learning on unlabeled data. 
RBMs are characterized by a joint distribution over the input $x \in \{0, 1\}^d$ and the representation $z \in \{0,1\}^r$: 
$P(x, z) = \frac{1}{Z} e^{-E(x,z)}$,
where $Z$ is the partition function and $E(x,z)$ is the energy function defined as: $E(x,z) {=} {-} a^\top x {-} b^\top z {-} x^\top W z$, where $a \in \mathbb{R}^{d}, b \in \mathbb{R}^r, W \in \mathbb{R}^{d \times r}$ are parameters to be learned. 

Then $P(z|x)$, for a fixed $x$, is a distribution parameterized by $b$ and $W$; which can be denoted as $q_{W,b}(z|x)$. Similarly, $P(x|z)$ is parameterized by $a$ and $W$ and thus can be denoted as $p_{W,a}(x|z)$. 
Given $x \sim \mathcal{D}_X$, the objective of the RBM is to minimize $- \mathbb{E}_{x \sim \mathcal{D}_X} [\log P(x)]$. 

While the standard RBM objective does not have a direct analogy under our functional regularization framework, a heuristic variant can be formulated under our framework.
If we use $\mathbb{E}_{P(x)}$ to denote the expectation over the marginal distribution of $x$ in the RBM, $\mathbb{E}_{P(z)}$ to denote the expectation over the marginal distribution of $z$, and $\mathbb{E}_{\mathcal{D}_X}$ to denote the expectation over $x \sim \mathcal{D}_X$. Then the following hold for the standard RBM: 
\begin{align}
P(z) & =  \mathbb{E}_{P(x)} [P(z|x)] = \mathbb{E}_{P(x)} [q_{W,b}(z|x)] \label{eqn:pz}
\\
P(x) & =  \mathbb{E}_{P(z)} [P(x|z)] = \mathbb{E}_{P(z)} [p_{W,a}(x|z)]
\end{align}
In the heuristic variant, we replace $P(x)$ with $\mathcal{D}_X$ in Equation~\eqref{eqn:pz}:
\begin{align}
\begin{split}
\hat{P}(z) =  \mathbb{E}_{\mathcal{D}_X} [P(z|x)] = \mathbb{E}_{\mathcal{D}_X} [q_{W,b}(z|x)],
\end{split}
\begin{split}
\hat{P}(x) =  \mathbb{E}_{\hat{P}(z)} [P(x|z)] = \mathbb{E}_{\hat{P}(z)} [p_{W,a}(x|z)],
\end{split}
\end{align}
and train using the loss:
\begin{align}
    L(W, a, b; x) := {-}\log \hat{P}(x) {=} {-}\log \mathbb{E}_{\hat{P}(z)} \LARGE\{ p_{W,a}(x|z) \LARGE\} {=} {-}\log \mathbb{E}_{\mathbb{E}_{\mathcal{D}_X} [q_{W,b}(z|x)]} \LARGE\{ p_{W,a}(x|z) \LARGE\}.
\end{align}
Furthermore, on introducing another weight matrix $F \in \mathbb{R}^{d \times r}$ for $P(x|z)$ and a hyper-parameter $\eta>0$, we can train the RBM using the loss:
\begin{align}
    L_\eta(W, a, b; x) :=   -\log \mathbb{E}_{\mathbb{E}_{\mathcal{D}_X} [q_{W,b}(z|x)]} \LARGE\{ p_{F,a}(x|z) \LARGE\} + \eta \| W  -  F\|_F^2.
\end{align}
When $\eta \rightarrow +\infty$, this loss function reduces to the loss $L(W, a, b; x)$. Here $q_{W,b}(z|x)$ can be viewed as the representation function $h$ of our framework, $p_{F,a}(x|z)$ as the regularization function $g$, and $L_\eta(W, a, b; x)$ as the regularization loss $L_r(h, g; x)$.

\paragraph{Comparison to GANs.} Finally, we would like to comment on Generative Adversarial Networks (GANs)~\cite{goodfellow2014generative}. While both functional regularization and GANs use auxiliary tasks having a function class, the goal of GANs is to learn a generative model using an auxiliary task through a discriminative function (the discriminator), while the goal of functional regularization is to learn a discriminative model using an auxiliary task which is usually (though not always) through a generative function (e.g., the decoder in auto-encoders).

\section{Sample Complexity Bounds} \label{app:bound}

\subsection{Same Domain, Realizable, Finite Hypothesis Classes}
For simplicity, we begin with the realizable case, where the hypothesis classes contain functions $g^*, h^*, f^*$ with a zero prediction and regularization loss. Here we consider that the unlabeled $U$ and labeled $S$ samples are from the same domain distribution $\mathcal{D}_X$. We derive the following Theorem.

\samerealizable*
\begin{proof}
We first show that with high probability, only the hypotheses $h$ in $\mathcal{H}_{\mathcal{D}_X, L_r} (\epsilon_0)$ have $L_r(h; U) = 0$.
For a given pair $g$ and $h$ with $L_r(h, g; \mathcal{D}_X) \ge \epsilon_0$, the probability that $L_r(h, g; U) = 0$ is at most
\begin{align}
    \mathbb{P}[L_r(h, g; U) = 0] \le (1-\epsilon_0)^{m_u} \le \frac{\delta}{2 |\mathcal{H}||\mathcal{G}|}
\end{align}
for the given value of $m_u$.
By the union bound, with probability at least $1 - \delta/2$, only those $g$ and $h$ with $L_r(h, g; \mathcal{D}_X) \le \epsilon_0$ have $L_r(h, g; U) = 0$. Then only hypotheses $h \in \mathcal{H}_{\mathcal{D}_X, L_r}(\epsilon_0)$ have $L_r(h; U) = 0$.

Then we show that with high probability, for all $h \in \mathcal{H}_{\mathcal{D}_X, L_r} (\epsilon_0)$, only those $f$ and $h$ with $L_c(f, h;\mathcal{D}) \le \epsilon_1$ can have $L_c(f,h; S) = 0$.
Similarly as above, for a pair $f \in \mathcal{F}$ and $h \in \mathcal{H}_{\mathcal{D}_X, L_r} (\epsilon_0)$ with $L_c(f, h;\mathcal{D}) \ge \epsilon_1$, the probability that $L_c(f, h; S) = 0$ is at most
\begin{align}
    \mathbb{P}[L_c(f, h; S) = 0] \le (1-\epsilon_1)^{m_\ell} \le \frac{\delta}{2 |\mathcal{H}_{\mathcal{D}_X, L_r} (\epsilon_0)||\mathcal{G}|}
\end{align}
for the given value of $m_\ell$.
By the union bound, with probability $1 - \delta/2$, for $f \in \mathcal{F}$ and $h \in \mathcal{H}_{\mathcal{D}_X, L_r} (\epsilon_0)$, only those with $L_c(f, h;\mathcal{D}) \le \epsilon_1$ can have $L_c(f, h; S) = 0$, proving the theorem.
\end{proof}

\subsection{Same Domain, Unrealizable Case, Infinite Hypothesis Classes}
\label{app:same_domain_unrealizable}

When the hypothesis classes are of an infinite size, we use metric entropy to measure the capacity.
Suppose $\mathcal{H}$ is indexed by parameter set $\Theta_H$  with norm $\|\cdot\|_H$,  $\mathcal{G}$ by $\Theta_G$ with norm $\|\cdot\|_G$, and $\mathcal{F}$ by $\Theta_F$  with norm $\|\cdot\|_F$. 
Assume that the losses are $L$-Lipschitz with respect to the parameters. That is,
\begin{align*}
    |L_r(h_{\theta}, g; x) - L_r(h_{\theta'}, g; x)| \le L\| \theta - \theta'\|_H,\forall g\in \mathcal{G}, x \in \mathcal{X},
    \\
    |L_r(h, g_{\theta}; x) - L_r(h, g_{\theta'}; x)| \le L\| \theta - \theta'\|_G, \forall h\in \mathcal{H}, x \in \mathcal{X},
    \\
    |L_c(h_{\theta}, f; x) - L_c(h_{\theta'}, f; x)| \le L\| \theta - \theta'\|_H, \forall  f \in \mathcal{F}, x \in \mathcal{X},
    \\
    |L_c(h, f_{\theta}; x) - L_c(h, f_{\theta'}; x)| \le L\| \theta - \theta'\|_G,  \forall  h\in \mathcal{H}, x \in \mathcal{X}.
\end{align*}
Let $\mathcal{N}_\mathcal{G}(\epsilon)$ be the $\epsilon$-covering number of $\mathcal{G}$ w.r.t.\ the associated norm. This is similarly defined for the other function classes.

The assumptions that the regularization and prediction losses are 0 are usually impractical due to noise in the data distribution. Realistically we may assume that there exist ground-truth functions that can make the regularization and prediction losses small. We begin by considering a setting where the prediction loss can be zero while the regularization loss is not.

\begin{thm}
Suppose there exist $h^* \in \mathcal{H}, f^* \in \mathcal{F}, g^* \in \mathcal{G}$ such that  $L_c(f^*, h^*; \mathcal{D}) = 0$ and $L_r(h^*, g^*; \mathcal{D}_X) \le \epsilon_r$.
For any $\epsilon_0, \epsilon_1 \in (0, 1/2)$, a set $U$ of $m_u$ unlabeled examples and a set $S$ of $m_l$ labeled examples is sufficient to learn to an error $\epsilon_1$ with probability $1 - \delta$, where
\begin{align}
    m_u & \ge \frac{C}{\epsilon^2_0} \ln\frac{1}{\delta} \left[ \ln\mathcal{N}_{\mathcal{G}}\left(\frac{\epsilon_0}{4L}\right) + \ln\mathcal{N}_{\mathcal{H} }\left(\frac{\epsilon_0}{4L}\right) \right],
    \\
    m_l & \ge \frac{C}{\epsilon_1} \ln\frac{1}{\delta} \left[ \ln\mathcal{N}_{\mathcal{F} }\left(\frac{\epsilon_1}{4L}\right) + \ln\mathcal{N}_{ \mathcal{H}_{\mathcal{D}_X, L_r}(\epsilon_r {+} \epsilon_0) }\left(\frac{\epsilon_1}{4L}\right) \right]  
\end{align}
for some absolute constant $C$.
In particular, with probability at least $1 - \delta$, the hypotheses $f \in \mathcal{F}, h \in \mathcal{H}$ with $L_c(f, h; S) = 0$ and $L_r(h, g; U) \le \epsilon_r + \epsilon_0$ for some $g \in \mathcal{G}$ satisfy
$
L_c(f, h; \mathcal{D}) \le \epsilon_1.
$
\end{thm}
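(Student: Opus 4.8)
The plan is to follow the two-stage skeleton of the proof of Theorem~\ref{thm:samerealizable}---first prune $\mathcal{H}$ using the unlabeled data, then learn the predictor over the pruned class using the labeled data---but to replace each multiplicative counting step by a concentration argument suited to infinite classes and, on the unlabeled side, to a nonzero regularization loss. The two sides sit in different concentration regimes: the regularization constraint $L_r(h,g;\mathcal{D}_X)\le\epsilon_r$ is only approximately satisfiable, so uniform convergence of $L_r$ requires a Hoeffding bound and yields the $1/\epsilon_0^2$ rate; the prediction task remains realizable, so I would keep the faster $1/\epsilon_1$ rate via a multiplicative (relative Chernoff) bound. Infinite classes are handled throughout by $\epsilon$-nets combined with the $L$-Lipschitz assumption, which converts population and empirical losses on a hypothesis into those on its nearest net representative at a cost of $L$ times the cover radius.

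For the unlabeled stage I would fix minimal $\tfrac{\epsilon_0}{4L}$-covers of $\mathcal{G}$ and $\mathcal{H}$ of sizes $\mathcal{N}_{\mathcal{G}}(\tfrac{\epsilon_0}{4L})$ and $\mathcal{N}_{\mathcal{H}}(\tfrac{\epsilon_0}{4L})$. Since $L_r\in[0,1]$, Hoeffding's inequality bounds, for each of the $\mathcal{N}_{\mathcal{G}}\mathcal{N}_{\mathcal{H}}$ net pairs, the deviation between $L_r(\cdot,\cdot;U)$ and $L_r(\cdot,\cdot;\mathcal{D}_X)$; a union bound over the net together with the stated $m_u$ drives all these deviations below a constant fraction of $\epsilon_0$ with probability $1-\delta/2$. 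Lipschitz transfer then extends this to a uniform deviation of $O(\epsilon_0)$ over all of $\mathcal{H}\times\mathcal{G}$. This has the two effects I need: the ground-truth pair stays empirically feasible, $L_r(h^*,g^*;U)\le\epsilon_r+\epsilon_0$, so the learning problem admits a zero-prediction-loss solution; and every $h$ for which some $g$ meets the empirical constraint satisfies $L_r(h;\mathcal{D}_X)=\min_{g'}L_r(h,g';\mathcal{D}_X)\le\epsilon_r+O(\epsilon_0)$, i.e.\ the feasible representations are confined to the population sublevel set $\mathcal{H}_{\mathcal{D}_X,L_r}(\epsilon_r+\epsilon_0)$ whose covering number appears in the $m_l$ bound.

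For the labeled stage I would work over a $\tfrac{\epsilon_1}{4L}$-cover of $\mathcal{F}\times\mathcal{H}_{\mathcal{D}_X,L_r}(\epsilon_r+\epsilon_0)$. Crucially, this sublevel set is a population object, independent of $U$, so the labeled concentration is independent of the unlabeled stage and the two failure events combine by a single union bound. For each net pair whose population prediction loss exceeds $\epsilon_1$, a multiplicative Chernoff bound shows the empirical loss is unlikely to fall below $\epsilon_1/2$; the stated $m_l$ with a union bound over the net makes this hold simultaneously with probability $1-\delta/2$, giving the $1/\epsilon_1$ dependence. Since $h^*\in\mathcal{H}_{\mathcal{D}_X,L_r}(\epsilon_r+\epsilon_0)$, any returned $f,h$ with $L_c(f,h;S)=0$ transfers to a net pair with small empirical loss, which by the high-probability event must have small population loss, and transferring back gives $L_c(f,h;\mathcal{D})\le\epsilon_1$. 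I expect the main obstacle to be coordinating the two distinct concentration regimes within one covering argument---retaining the fast $1/\epsilon_1$ rate by invoking the realizable Chernoff bound over the \emph{pruned} infinite class (which requires verifying that $h^*$ survives the pruning and that the small nonzero empirical loss introduced by covering is absorbed by the $\epsilon_1/2$ threshold), while the unlabeled side is forced into the slower $1/\epsilon_0^2$ Hoeffding regime by the unrealizable regularization loss.
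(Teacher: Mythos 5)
Your proposal matches the paper's proof essentially step for step: an $\frac{\epsilon_0}{4L}$-cover of $\mathcal{H}\times\mathcal{G}$ with Hoeffding, a union bound, and Lipschitz transfer gives uniform convergence of $L_r$ (hence the $1/\epsilon_0^2$ rate and the feasibility of $(h^*,g^*)$), and a second covering argument over $\mathcal{F}\times\mathcal{H}_{\mathcal{D}_X,L_r}(\epsilon_r+\epsilon_0)$ with a realizable-style multiplicative Chernoff bound preserves the $1/\epsilon_1$ rate on the labeled side, exactly as the paper does. If anything, you are more explicit than the paper about two points it glosses over: the small nonzero empirical loss a hypothesis acquires when transferred to its net representative, and the fact that the regularization sublevel set is a population object independent of $U$, which is what lets the two failure events combine by a single union bound.
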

\begin{proof}
First, we show that with $m_u$ unlabeled examples, by a covering argument over $\mathcal{H}$ and $\mathcal{G}$ (see, e.g.,~\cite{vershynin2018high}), it is guaranteed that with probability $1 - \delta/2$, all $h \in \mathcal{H}$ and $g \in \mathcal{G}$ satisfy $|L_r(h, g; U) - L_r(h, g; \mathcal{D}_X) | \le \epsilon_0$. 
More precisely, let $\mathcal{C}_{\mathcal{G}}\left(\frac{\epsilon_0}{4L}\right)$ be a $\frac{\epsilon_0}{4L}$-covering of $\mathcal{G}$, and $\mathcal{C}_{\mathcal{H} }\left(\frac{\epsilon_0}{4L}\right)$ be a $\frac{\epsilon_0}{4L}$-covering of $\mathcal{H}$. Then by the union bound, all $h' \in \mathcal{C}_{\mathcal{H} }\left(\frac{\epsilon_0}{4L}\right)$ and $g' \in \mathcal{C}_{\mathcal{G} }\left(\frac{\epsilon_0}{4L}\right)$ satisfy $|L_r(h', g'; U) - L_r(h', g'; \mathcal{D}_X) | \le \epsilon_0/4$. Then the claim follows from the definition of the coverings and the Lipschitzness of the losses.

By the claim, we have $L_r(h^*, g^*; U) \le L_r(h^*, g^*; \mathcal{D}_X)  +  \epsilon_0 \le \epsilon_r + \epsilon_0$. So $h^* \in \mathcal{H}_{\mathcal{D}_X, L_r}(\epsilon_r + \epsilon_0)$, and thus the optimal value $L_c(f, h; S)$ subject to $L_r(h, g; U) \le \epsilon_r + \epsilon_0$ for some $g \in \mathcal{G}$ is 0.
On the other hand, again by a covering argument over $\mathcal{H}$ and $\mathcal{F}$, with probability at least $1 - \delta/2$, for all $h \in \mathcal{H}_{\mathcal{D}_X, L_r}(\epsilon_r + \epsilon_0)$ and all $f \in \mathcal{F}$, only those with $ L_c(f, h; \mathcal{D}) \le \epsilon_1$ can have $L_c(f, h; S) = 0$. The theorem statement then follows.
\end{proof}

The theorem shows that when the optimal regularization loss is not zero but $\epsilon_r > 0$, one needs to do the learning subject to $L_r(h; U) \le \epsilon_r + \epsilon_0$ and the unlabeled sample complexity has a dependence on $\epsilon_0$ by $\frac{1}{\epsilon_0^2}$, instead of $\frac{1}{\epsilon_0}$.

We are now ready to present the result for the setting where both the optimal prediction and regularization losses are non-zero. 

\thmcovering*

\begin{proof}
With $m_u$ unlabeled examples, by a standard covering argument, it is guaranteed that with probability $1 - \delta/4$, all $h \in \mathcal{H}$ and $g \in \mathcal{G}$ satisfy $|L_r(h, g; U) - L_r(h, g; \mathcal{D}_X) | \le \epsilon_0$. In particular, $L_r(h^*, g^*; U) \le L_r(h^*, g^*; \mathcal{D}_X)  +  \epsilon_0 \le \epsilon_r + \epsilon_0$.
Then again by a covering argument, the labeled sample size $m_l$ implies that with probability at least $1 - \delta/2$, all hypotheses $h \in \mathcal{H}_{\mathcal{D}_X, L_r}(\epsilon_r + 2\epsilon_0)$ and all $f \in \mathcal{F}$ have $L_c(f, h; S) \le L_c(f, h; \mathcal{D}) + \epsilon_1/2$. 
Finally, by using Hoeffding's bounds, with probability at least $1- \delta/4$, we have
$$
L_c(f^*, h^*; S) \le L_c(f^*, h^*; \mathcal{D})  + \mathcal{O}\left(\sqrt{\frac{1}{m_l} \ln \frac{1}{\delta}}\right) \le L_c(f^*, h^*; \mathcal{D})  + \epsilon_1/2.
$$
Therefore, with a probability of at least $1 - \delta$, the hypotheses $f \in \mathcal{F}, h \in \mathcal{H}$ that optimizes $L_c(f, h; S)$ subject
to $L_r(h, g; U) \le \epsilon_r + \epsilon_0$ for some $g \in \mathcal{G}$ have the following guarantee. 
First, since $L_r(h, g; U) \le \epsilon_r + \epsilon_0$, we have $L_r(h, g; \mathcal{D}_X) \le \epsilon_r + 2\epsilon_0$, and thus $h \in \mathcal{H}_{\mathcal{D}_X, L_r}(\epsilon_r + 2\epsilon_0)$. Then we have
\begin{align}
 L_c( f, h; \mathcal{D}) & \le L_c(f, h; S) + \epsilon_1/2
\\
& \le L_c(f^*, h^*; S) + \epsilon_1/2
\\
& \le L_c(f^*, h^*; \mathcal{D}) + \mathcal{O}\left(\sqrt{\frac{1}{m_l} \ln \frac{1}{\delta}}\right) +  \epsilon_1/2
\\
& \le L_c(f^*, h^*; \mathcal{D}) + \epsilon_1.
\end{align}
This completes the proof of the theorem.
\end{proof}

The above analysis also holds with some other capacity measure of the hypothesis classes, like the VC-dimension or Rademacher complexity. We give an example for using the VC-dimension (assuming the prediction task is a classification task). The proof follows similarly to Theorem~\ref{thm:covering}, but using the VC-dimension bound instead of the $\epsilon$-net argument.
\begin{restatable}{thm}{sameunrealizable}
\label{thm:unrealize_h_f}
Suppose there exist $h^* \in \mathcal{H}, f^* \in \mathcal{F}, g^* \in \mathcal{G}$ such that  $L_c(f^*, h^*; \mathcal{D}) \le \epsilon_c$ and $L_r(h^*, g^*; \mathcal{D}_X) {\le} \epsilon_r$.
For any $\epsilon_0, \epsilon_1 {\in} (0, 1/2)$, a set $U$ of $m_u$ unlabeled examples and a set $S$ of $m_l$ labeled examples are sufficient to learn to an error $ \epsilon_c {+} \epsilon_1$ with probability $1 {-} \delta$, where
\begin{align}
    \begin{split}
    m_u \ge \frac{C}{\epsilon_0^2} \left[ d(\mathcal{G} \circ \mathcal{H}) \ln\frac{1}{\epsilon_0} {+} \ln\frac{1}{\delta} \right],  
    \end{split}
    \begin{split}
    m_l \ge  \frac{C}{\epsilon_1^2} \left[ d(\mathcal{F} \circ \mathcal{H}_{ \mathcal{D}_X, L_r}(\epsilon_r {+} 2\epsilon_0) )\ln\frac{1}{\epsilon_1} {+} \ln\frac{1}{\delta}  \right] 
    \end{split}
\end{align}
for some absolute constant $C$.
In particular, with probability at least $1 - \delta$, the hypotheses $f \in \mathcal{F}, h \in \mathcal{H}$ that optimize $L_c(f, h; S)$ subject
to $L_r(h; U) \le \epsilon_r + \epsilon_0$ satisfy 
$
L_c(f, h; \mathcal{D}) \le L_c(f^*, h^*; \mathcal{D}) + \epsilon_1.
$
\end{restatable}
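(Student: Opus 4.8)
The plan is to follow exactly the same three-part structure as the proof of Theorem~\ref{thm:covering}, replacing each $\epsilon$-net / covering-number uniform convergence step by the corresponding Vapnik--Chervonenkis uniform convergence bound, while keeping the Hoeffding step for the fixed reference pair $f^*, h^*$ unchanged. The VC sample sizes $\frac{C}{\epsilon_0^2}[d(\mathcal{G}\circ\mathcal{H})\ln\frac{1}{\epsilon_0}+\ln\frac{1}{\delta}]$ and $\frac{C}{\epsilon_1^2}[d(\mathcal{F}\circ\mathcal{H}_{\mathcal{D}_X,L_r}(\epsilon_r+2\epsilon_0))\ln\frac{1}{\epsilon_1}+\ln\frac{1}{\delta}]$ are precisely the thresholds delivered by the standard VC generalization theorem, so each step reduces to invoking that theorem for the appropriate loss class.

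First I would establish uniform convergence of the regularization loss on the unlabeled data. Viewing $L_r$ as a function of the pair $(h,g)$ yields a loss class whose capacity is controlled by $d(\mathcal{G}\circ\mathcal{H})$; with the stated $m_u$, the VC theorem gives, with probability $1-\delta/4$, that $|L_r(h,g;U)-L_r(h,g;\mathcal{D}_X)|\le\epsilon_0$ simultaneously for all $h\in\mathcal{H}, g\in\mathcal{G}$. Two consequences follow. First, $L_r(h^*,g^*;U)\le L_r(h^*,g^*;\mathcal{D}_X)+\epsilon_0\le\epsilon_r+\epsilon_0$, so $h^*$ is feasible for the empirical constraint $L_r(h;U)\le\epsilon_r+\epsilon_0$. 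Second, any $h$ satisfying this constraint via some $g$ with $L_r(h,g;U)\le\epsilon_r+\epsilon_0$ must obey $L_r(h,g;\mathcal{D}_X)\le\epsilon_r+2\epsilon_0$, hence $L_r(h;\mathcal{D}_X)\le\epsilon_r+2\epsilon_0$ and $h\in\mathcal{H}_{\mathcal{D}_X,L_r}(\epsilon_r+2\epsilon_0)$.

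Second, I would apply VC uniform convergence to the prediction loss over the restricted class $\mathcal{F}\circ\mathcal{H}_{\mathcal{D}_X,L_r}(\epsilon_r+2\epsilon_0)$. Since this restriction is a population object fixed by $\mathcal{D}_X$ before $S$ is drawn, the bound is legitimate, and with the stated $m_l$ we obtain, with probability $1-\delta/2$, that $|L_c(f,h;S)-L_c(f,h;\mathcal{D})|\le\epsilon_1/2$ for all $f\in\mathcal{F}$ and all $h$ in that subclass. Separately, Hoeffding's inequality on the single fixed pair $(f^*,h^*)$ gives $L_c(f^*,h^*;S)\le L_c(f^*,h^*;\mathcal{D})+\epsilon_1/2$ with probability $1-\delta/4$. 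Letting $(f,h)$ be the feasible empirical minimizer, step one places it in $\mathcal{H}_{\mathcal{D}_X,L_r}(\epsilon_r+2\epsilon_0)$, so I can chain $L_c(f,h;\mathcal{D})\le L_c(f,h;S)+\epsilon_1/2\le L_c(f^*,h^*;S)+\epsilon_1/2\le L_c(f^*,h^*;\mathcal{D})+\epsilon_1$, where the middle step uses optimality of $(f,h)$ together with feasibility of $(f^*,h^*)$. A union bound over the three failure events ($\delta/4+\delta/2+\delta/4=\delta$) finishes the argument.

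The main obstacle is the coupling between the two samples through the population-defined subclass: the labeled-data uniform convergence is invoked over $\mathcal{H}_{\mathcal{D}_X,L_r}(\epsilon_r+2\epsilon_0)$ rather than all of $\mathcal{H}$ (this is exactly what produces the sample-complexity savings), yet one must verify that the learned $h$ actually lands in this subclass, which holds only on the good event from the unlabeled stage. A secondary technical point: because the theorem is phrased with a VC-dimension $d(\cdot)$, one should either restrict attention to a classification-style $0$--$1$ regularization loss, or, if $L_r$ is real-valued in $[0,1]$, read $d(\mathcal{G}\circ\mathcal{H})$ as the pseudo-dimension and invoke the real-valued uniform convergence theorem; the rest of the argument is unaffected.
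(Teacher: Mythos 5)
Your proposal is correct and follows essentially the same route as the paper: the paper proves Theorem~\ref{thm:unrealize_h_f} by remarking that it "follows similarly to Theorem~\ref{thm:covering}, but using the VC-dimension bound instead of the $\epsilon$-net argument," and your three-part argument (VC uniform convergence on the unlabeled stage to place the learned $h$ in $\mathcal{H}_{\mathcal{D}_X,L_r}(\epsilon_r+2\epsilon_0)$ and certify feasibility of $h^*$, VC uniform convergence for $\mathcal{F}\circ\mathcal{H}_{\mathcal{D}_X,L_r}(\epsilon_r+2\epsilon_0)$ on the labeled stage, Hoeffding for the fixed pair $(f^*,h^*)$, then the same chaining with the same $\delta/4+\delta/2+\delta/4$ split) is exactly that argument made explicit. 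Your closing caveats---that the restricted subclass is population-defined and hence independent of $S$, and that $d(\cdot)$ must be read as a pseudo-dimension if the losses are real-valued---are sound refinements the paper glosses over (it only stipulates that the prediction task be classification).
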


\subsection{Different Domains, Unrealizable, Infinite Hypothesis Classes}

In practice, it is often the case that the unlabeled data is from a different domain than the labeled data. For example, state-of-the-art NLP systems are often trained on a large general unlabeled corpus (e.g., the entire Wikipedia) and a small specific labeled corpus (e.g., a set of medical records). That is, the unlabeled data $U$ is from a distribution $\mathcal{U}_X$ different from $\mathcal{D}_X$, the marginal distribution of $x$ in the labeled data. In this setting, we show that our previous analysis still holds.

\diffdomain*
\begin{proof} 
The proof follows that for the setting with the same distribution for input feature vectors in the labeled data and unlabeled data; here we only mention the proof steps involving $\mathcal{U}_X$.  

Even when the unlabeled data is from a different distribution $\mathcal{U}_X$, we still have that with probability $1 - \delta/4$, all $h \in \mathcal{H}$ and $g \in \mathcal{G}$ satisfy $|L_r(h, g; U) - L_r(h, g; \mathcal{U}_X) | \le \epsilon_0$ for the given value of $m_u$. In particular, $L_r(h^*, g^*; U) \le L_r(h^*, g^*; \mathcal{U}_X)  +  \epsilon_0 \le \epsilon_r + \epsilon_0$.
Then the labeled sample size $m_l$ implies that with probability at least $1 - \delta/2$, all hypotheses $h \in \mathcal{H}_{\mathcal{U}_X, L_r}(\epsilon_r + 2\epsilon_0)$ and all $f \in \mathcal{F}$ have $L_c(f, h; S) \le L_c(f, h; \mathcal{D}) + \epsilon_1/2$. Also, for any $h,g$ with $L_r(h, g; U) \le \epsilon_r + \epsilon_0$, we have $L_r(h, g; \mathcal{U}_X) \le \epsilon_r + 2\epsilon_0$, and thus $h \in \mathcal{H}_{\mathcal{D}_X, L_r}(\epsilon_r + 2\epsilon_0)$.
The rest of the proof follows that of Theorem~\ref{thm:covering}.
\end{proof}


\paragraph{Remarks}
We would like to briefly comment on interpreting the reduction in sample complexity of labeled data when using functional regularization in our bounds. The sample complexity bounds are \emph{upper bounds} and are aimed at aiding quantitative analysis by bounding the actual sample size needed for learning (under assumptions on the data and the hypothesis class). However, there exist settings where these bounds are nearly tighter mathematically (e.g., the standard lower bound via VC-dimension). More precisely, there exist hypothesis classes, such that for any learning algorithm, there exists a data distribution and a target function such that a sample, equal in size to the upper bound up to logarithmic factors, is required for learning (a more precise statement can be found in ~\cite{mohri2018foundations}). Additionally, these bounds usually do not take into account the effect of optimization~\cite{zhang2016understanding}.

While these upper bounds are not an exact quantification, they usually align well with the sample size needed for learning in practice, thereby providing useful insights. The reduction in our bounds on using functional regularization can roughly estimate the actual reduction in practice. Further this can provide useful theoretical insights such as the regularization restricting the learning to a subset of the hypothesis class of representation functions. Similar to prior sample complexity studies, we believe our sample complexity bounds can prove to be a useful analysis tool.

\section{Proofs for Applying the Theoretical Framework to Concrete Examples}
\label{app:example}

\subsection{Auto-encoder} \label{app:example_auto}

We first recall the details of the example: $\mathcal{H}$ is the class of linear functions from $\mathbb{R}^d$ to $\mathbb{R}^r$ where ${r < d/2}$, and $\mathcal{F}$ to be the class of linear functions over some activations. That is,
\begin{align}
z = h_W(x) = W x,~~y = f_a(z) = \sum_{i=1}^r a_i \sigma(z_i)\ , \textrm{~where~} W {\in} \mathbb{R}^{r \times d}, ~~ a {\in} \mathbb{R}^{r} \tag{\ref{eqn:auto-encoder}}
\end{align}
Here $\sigma(t)$ is an activation function, the rows of $W$ and $a$ have $\ell_2$ norm bounded by 1. We consider the Mean Square Error (MSE) prediction loss, i.e., $L_c(f, h; x) {=} \|y {-} f(h(x))\|^2_2$. 

Also recall that we assume the data distribution having the following property: let the columns of $B \in \mathbb{R}^{d\times d}$ be the eigenvectors of $\Sigma {:=}\mathbb{E}[xx^\top]$, then the labels are largely determined by the signal in the first $r$ directions: $y {=}  (a^*)^\top z^* {+} \nu$
and $z^* = B_{1:r}^\top x$, where $a^*$ is a ground-truth parameter with $\|a^*\|_2 {\le} 1$, $B_{1:r}$ is the set of first $r$ eigenvectors of $\Sigma$, and $\nu$ is a small Gaussian noise. We also assume that the $r^{\text{th}}$ and ${r{+}1}^{\text{th}}$ eigenvalues of $\Sigma $ are different so that the corresponding eigenvectors can be distinguished.
Let $\epsilon_r$ denote $\mathbb{E}\|x - B_{1:r} B_{1:r}^\top x  \|_2^2$. 

Finally, we recall that the functional regularization $\mathcal{G}$ we used is given by the class of linear functions from $\mathbb{R}^r$ to $\mathbb{R}^d$, i.e., $\hat{x} {=} g_V(z) {=} Vz$ where $V {\in} \mathcal{R}^{d \times r}$ with orthonormal columns. The regularization loss 
$L_r(h, g; x) {=} \|x {-} g(h(x))\|^2_2$. 

For simplicity of analysis, we assume access to infinite unlabeled data, and set the threshold $\tau = \epsilon_r$. Strictly speaking, we need to allow $L_r(h, g; \mathcal{D}_X) \le \epsilon_r + \epsilon$ for a small $\epsilon>0$ due to finite unlabeled data. A similar but more complex argument holds for that case. Here we assume infinite unlabeled data to simplify the presentation and better illustrate the intuition, since our focus is on quantifying the reduction in labeled data. 

Formally, we calculate the sample complexity bounds in the limit $m_u {\rightarrow} {+}\infty$. Equivalently we consider the learning problem: 
\begin{align}
    \min_{f \in \mathcal{F}, h \in \mathcal{H}} L_c(f, h\ ; S), \textrm{~~s.t.~~} L_r(h\ ; \mathcal{D}_X) \le \epsilon_r. 
\end{align}

Let $\mathcal{N}_{\mathcal{C}}(\epsilon)$ denote the $\epsilon$-covering number of a class $\mathcal{C}$ w.r.t.\ the $\ell_2$ norm (i.e., Euclidean norm for the weight vector $a$, and Frobenius norm for the weight matrices $W$ and $V$). Let $L$ denote the Lipschitz constant of the losses (See Appendix~\ref{app:same_domain_unrealizable}). Without regularization, the standard $\epsilon$-net argument shows that the labeled sample complexity, for an error $ \epsilon \ $ close to the optimal, is $\frac{C}{\epsilon^2} \left[ \ln \mathcal{N}_{  \mathcal{F}}\left(\frac{\epsilon}{4L}\right) + \ln \mathcal{N}_{  \mathcal{H}}\left(\frac{\epsilon}{4L}\right) \right]$ for some absolute constant $C$. 
Applying our framework when using regularization, the sample complexity is bounded by $\frac{C}{\epsilon^2} \left[ \ln \mathcal{N}_{\mathcal{F}}\left(\frac{\epsilon}{4L}\right) + \ln \mathcal{N}_{\mathcal{H}_{\mathcal{D}_X, L_r}(\epsilon_r)}\left(\frac{\epsilon}{4L}\right) \right]$.
To quantify the reduction in the bound, we show the following lemma.

\begin{lem}
\label{lem:auto-encoder}
For $\epsilon/4L < 1/2$,
\begin{align}
    \mathcal{N}_{ \mathcal{H}}\left(\frac{\epsilon}{4L}\right) \ge  {d - r \choose r} \mathcal{N}_{ \mathcal{H}_{\mathcal{D}_X, L_r}(\epsilon_r) }\left(\frac{\epsilon}{4L}\right).
\end{align}
\end{lem}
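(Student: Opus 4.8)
The plan is to exhibit $\binom{d-r}{r}$ pairwise well-separated isometric copies of $\mathcal{H}_{\mathcal{D}_X, L_r}(\epsilon_r)$ sitting inside $\mathcal{H}$, and then argue that any single ball of radius $\epsilon' := \epsilon/(4L)$ can meet at most one copy, so that a covering of $\mathcal{H}$ must spend disjoint sets of centers on the different copies. Concretely, for each size-$r$ subset $S \subseteq \{r+1,\ldots,d\}$ I would define
$$
  \mathcal{H}_S := \{h_W : W = O B_S^\top,\ O \in \mathbb{R}^{r\times r}\text{ orthonormal}\} \subseteq \mathcal{H},
$$
which are exactly the sets appearing in the stated containment for $\mathcal{H}$. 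Since the columns of $B$ are orthonormal eigenvectors of the symmetric matrix $\Sigma$, we have $B_S^\top B_S = I_r$, so the map $O \mapsto O B_S^\top$ is a Frobenius isometry, i.e.\ $\|O B_S^\top - O' B_S^\top\|_F = \|O - O'\|_F$. This makes $\mathcal{N}_{\mathcal{H}_S}(\epsilon')$ equal to the covering number of the orthogonal group $O(r)$, independent of $S$, and in particular equal to $\mathcal{N}_{\mathcal{H}_{\mathcal{D}_X, L_r}(\epsilon_r)}(\epsilon')$, which is the case $S=\{1,\ldots,r\}$.

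The heart of the argument is the separation between two distinct copies. For $W = O B_S^\top$ and $W' = O' B_{S'}^\top$, expanding the Frobenius norm and using $B_S^\top B_S = B_{S'}^\top B_{S'} = I_r$ gives
$$
  \|W - W'\|_F^2 = 2r - 2\,\mathrm{tr}\!\left(O^\top O'\, B_{S'}^\top B_S\right).
$$
Here $B_{S'}^\top B_S$ is a $0/1$ matrix whose $(i,j)$ entry is $1$ exactly when the $i$-th element of $S'$ equals the $j$-th element of $S$; it is therefore a partial permutation matrix with precisely $k := |S\cap S'|$ ones, hence with singular values equal to $k$ ones and $r-k$ zeros. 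Maximizing $\mathrm{tr}(Q\,B_{S'}^\top B_S)$ over orthonormal $Q = O^\top O'$ yields the sum of singular values, namely $k$, so $\min_{O,O'} \|W-W'\|_F^2 = 2(r-k)$. For $S \neq S'$ we have $k \le r-1$, giving a minimum distance of at least $\sqrt{2}$ between any point of $\mathcal{H}_S$ and any point of $\mathcal{H}_{S'}$.

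To finish, the hypothesis $\epsilon/4L < 1/2$ gives $2\epsilon' < 1 < \sqrt{2}$, so two points lying in different copies cannot both fall inside a common $\epsilon'$-ball (that would force them within $2\epsilon' < \sqrt{2}$ of each other). Thus each center of an $\epsilon'$-cover of $\mathcal{H}$ meets at most one $\mathcal{H}_S$, and for a fixed $S$ the centers whose balls intersect $\mathcal{H}_S$ form an $\epsilon'$-cover of that copy. Since the index sets are disjoint across the $\binom{d-r}{r}$ subsets $S$, summing gives
$$
  \mathcal{N}_{\mathcal{H}}(\epsilon') \ge \sum_{S} \mathcal{N}_{\mathcal{H}_S}(\epsilon') = \binom{d-r}{r}\,\mathcal{N}_{\mathcal{H}_{\mathcal{D}_X, L_r}(\epsilon_r)}(\epsilon'),
$$
which is the claim. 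The main obstacle I expect is the separation step: one must handle the worst case of \emph{maximally overlapping} subsets $S, S'$ and confirm the copies are still a constant ($\sqrt{2}$) apart, which is precisely what the partial-permutation/singular-value computation delivers; the isometry and the ball-counting are then routine.
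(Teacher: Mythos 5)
Your proposal takes essentially the same route as the paper's proof: the same family of copies $\mathcal{H}_S = \{O B_S^\top : O \in \mathbb{R}^{r\times r} \text{ orthonormal}\}$ indexed by the $\binom{d-r}{r}$ size-$r$ subsets $S \subseteq \{r+1,\ldots,d\}$, the same isometry/symmetry observation equating each copy's covering number with that of $\mathcal{H}_{\mathcal{D}_X,L_r}(\epsilon_r)$, and the same separation-plus-ball-counting conclusion. If anything, your separation step is more rigorous than the paper's: the paper's chain of trace identities in effect replaces $\mathrm{trace}\bigl(O^\top O' B_{S'}^\top B_S\bigr)$ by $\mathrm{trace}\bigl(B_{S'}^\top B_S\bigr)$, which is not an equality for general $O \neq O'$; your bound via the nuclear norm of the partial permutation matrix $B_{S'}^\top B_S$ (namely $|S\cap S'| \le r-1$) is the correct justification of the worst case of overlapping subsets, and it recovers the same $\sqrt{2}$ separation the paper asserts.

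The one ingredient you assume rather than prove is the identification $\mathcal{H}_{\mathcal{D}_X,L_r}(\epsilon_r) = \{h_W : W = O B_{1:r}^\top,\ O \text{ orthonormal}\}$, which you import from the main text. In the paper this is the first half of the lemma's proof: one writes $L_r(h_W,g_V;\mathcal{D}_X) = \mathbb{E}\|x - VWx\|_2^2 = \mathrm{trace}\bigl((I-VW)^\top(I-VW)\Sigma\bigr)$, recognizes this as the rank-$r$ approximation error whose minimum value $\epsilon_r$ is attained exactly when $VW$ projects onto the span of the top $r$ eigenvectors of $\Sigma$, and invokes the assumed gap between the $r$-th and $(r+1)$-th eigenvalues to conclude that this optimal subspace, and hence the optimal $VW$, is unique, so that precisely the maps $W = OB_{1:r}^\top$ lie in the constrained class. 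Without this characterization the right-hand side of the claimed inequality is not pinned down, so a self-contained proof of the lemma should include it; with it, your counting argument is complete and correct.
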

\begin{proof}
First, recall that the regularization loss is 
\begin{align}
    L_r(h, g; \mathcal{D}_X) & = \mathbb{E}_x  \|x - g(h(x))\|^2_2 \notag \\
    &= \mathbb{E}_x  \|x - VWx\|^2_2
\end{align}
which is the $r$-rank approximation of the data.
So in the optimal solution, the columns of $V$ and the rows of $W$ should span the subspace of the top $r$ eigenvectors $\Sigma$. More precisely,
\begin{align}
    L_r(h, g; \mathcal{D}_X)
    & = \mathbb{E}_x [x^\top (I {-} VW)^\top (I {-} VW) x] \notag \\
    & = \mathbb{E}_x [\mathrm{trace}(x^\top (I {-} VW)^\top (I {-} VW) x)] \notag
    \\
    & =  \mathbb{E}_x [\mathrm{trace}( (I {-} VW)^\top (I {-} VW) xx^\top)] \notag \\
    & =  \mathrm{trace}( (I {-} VW)^\top (I {-} VW) \Sigma). \notag
    \\
    & = \mathrm{trace}( (I {-} VW) \Sigma).
\end{align} 
Since $V$ and $W$ are orthonormal and have rank $r$, the optimal $VW$ should span the subspace of the top $r$ eigenvectors of $\Sigma$ and the optimal loss is given by $\epsilon_r$.%
\footnote{The optimal product of $V$ and $W$ should span the subspace of the top $r$ eigenvectors of $\Sigma$. But note that there are different pairs of $V$ and $W$ which can achieve the same product.} 
Since the $r$-th and $r+1$-th eigenvalues of $\Sigma$ are different, the optimal $VW$ is unique, and thus we have
\begin{align}
    \mathcal{H}_{\mathcal{D}_X, L_r}(\epsilon_r) & = \{
    OB_{1:r}^\top: O \in \mathbb{R}^{r \times r}, O \textrm{~is orthonormal}\}. \notag
\end{align}

On the other hand, if $B_S$ refers to the sub-matrix of columns in $B$ having indices in $S$, then clearly,
\begin{align}
    \mathcal{H} & \supseteq \mathcal{H}_S := \{O B_S^\top:O \in \mathbb{R}^{r \times r}, O \textrm{~is orthonormal}\}, \notag
\end{align}
for any $S \subseteq \{r+1, r+2, \ldots, d\}, |S| = r$. By symmetry, $\mathcal{N}_{\mathcal{H}_S}(\epsilon') = \mathcal{N}_{\mathcal{H}_{\mathcal{D}_X, L_r}(\epsilon_r)}(\epsilon')$ for any $\epsilon'>0$. 
Now it is sufficient to prove that $\mathcal{H}_S$ and $\mathcal{H}_{S'}$ are sufficiently far away for different $S$ and $S'$. This is indeed the case, since $\|O B_S^\top - O' B_{S'}^\top\|^2_F > 1$ for any orthonormal $O$ and $O'$:
\begin{align}
    \|O B_S^\top - O' B_{S'}^\top\|^2_F 
    & = \mathrm{trace}\left( (O B_S^\top - O' B_{S'}^\top)^\top (O B_S^\top - O' B_{S'}^\top) \right)
    \\
    & = \mathrm{trace}\left( (O B_S^\top)^\top (O B_S^\top) \right) + \mathrm{trace}\left( (O' B_{S'}^\top)^\top(O' B_{S'}^\top) \right)  \notag
    \\
    & \quad - \mathrm{trace}\left( (O B_S^\top)^\top( O' B_{S'}^\top) \right) - \mathrm{trace}\left(  (O' B_{S'}^\top)^\top (O B_S^\top) \right)
    \\
    & = \|O B_S^\top\|^2_F + \|O' B_{S'}^\top\|^2_F  \notag
    \\
    & \quad - \mathrm{trace}\left( ( O' B_{S'}^\top)(O B_S^\top)^\top \right) - \mathrm{trace}\left(   (O B_S^\top)(O' B_{S'}^\top)^\top \right)
    \\
    & = \|B_S^\top\|^2_F + \|B_{S'}\|^2_F - \mathrm{trace}\left( B_{S'}^\top B_S \right) - \mathrm{trace}\left(B_S^\top B_{S'} \right)
    \\
    & \ge r + r - (r-1) - (r-1) = 2.
\end{align}
This completes the proof. 
\end{proof}


\subsection{Masked Self-supervision} \label{app:example_masked}

We first recall the details of the example: $\mathcal{H}$ is the class of linear functions from $\mathbb{R}^d$ to $\mathbb{R}^r$ where $r < (d-1)/2 $ followed by a quadratic activation, and $\mathcal{F}$ is the class of linear functions from $\mathbb{R}^r$ to $\mathbb{R}$. That is,
\begin{align}
z = h_W(x) {=} [\sigma(w_1^\top x), \ldots, \sigma(w_r^\top x)] \in \mathbb{R}^r \ ,\ y = f_a(z) = a^\top z, \textrm{~where~} w_i\in \mathbb{R}^{d} \ ,  a\in \mathbb{R}^{r}. \tag{\ref{eqn:masked}}
\end{align}
Here $\sigma(t) {=} t^2$ for $t \in \mathbb{R}$ is the quadratic activation function. 
Since $\sigma(ct) {=} c^2 t^2$, by scaling, 
w.l.o.g.\ we can assume that $\|w_i\|_2 {=} 1$ and $\|a\|_2 {\le} 1$. 
Without prior knowledge of data, no regularization refers to end-to-end training on $\mathcal{F} {\circ} \mathcal{H}$. 

Also recall that we assume the data $x$ satisfies $x_1 = \sum_{i=1}^r ((u_i^*)^\top x_{2:d})^2$, where $x_{2:d} = [x_2, x_3, \ldots, x_d]$ and $u_i^*$ is the $i$-th eigenvector of $\Sigma := \mathbb{E}[x_{2:d} x_{2:d}^\top]$. Furthermore, the label $y$ is given by $y = \sum_{i=1}^r a^*_i \sigma((u_i^*)^\top x_{2:d}) + \nu$ for some $\|a^*\|_2 {\le} 1$ and a small Gaussian noise $\nu$.
We also assume a significant difference in the $r^{\text{th}}$ and ${r{+}1}^{\text{th}}$ eigenvalues of $\Sigma$.

Finally, we recall that we used a masked self-supervision  functional regularization by constraining the first coordinate of $w_i$ to be $0$ for $h$, and choosing the regularization function $g(z) {=} \sum_{i=1}^r z_i$ and the regularization loss $L_r(h, g; x) {=} (x_1 {-} g(h_W(x)))^2$. Note that there is only one $g \in \mathcal{G}$, which is a special case of our framework and our sample complexity theorems still apply.

Again for simplicity, we assume access to infinite unlabeled data, and set the threshold $\tau=0$. Our framework shows that the functional regularization via masked self-supervision reduces the labeled sample bound by $\frac{C}{\epsilon^2} \left[ \ln \mathcal{N}_{ \mathcal{H}}\left(\frac{\epsilon}{4L}\right) {-} \ln \mathcal{N}_{  \mathcal{H}_{\mathcal{D}_X, L_r}(0)}\left(\frac{\epsilon}{4L} \right)\right]$ for some absolute constant $C$.  
The following lemma then gives an estimation of this reduction.

\begin{lem}
\label{lem:masked}
For $\epsilon/4L < 1/2$,
\begin{align}
    \mathcal{N}_{ \mathcal{H}}\left(\frac{\epsilon}{4L}\right) \ge  {d - 1 - r \choose r} \mathcal{N}_{ \mathcal{H}_{\mathcal{D}_X, L_r}(0) }\left(\frac{\epsilon}{4L}\right).
\end{align}
\end{lem}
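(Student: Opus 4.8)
The plan is to mirror the structure of Lemma~\ref{lem:auto-encoder}: first pin down the regularized class $\mathcal{H}_{\mathcal{D}_X, L_r}(0)$ exactly, then exhibit $\binom{d-1-r}{r}$ disjoint ``copies'' of it sitting inside $\mathcal{H}$ that are pairwise well-separated in parameter space, and finally turn this separation into the claimed multiplicative gap between covering numbers by a packing-style counting argument.

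First I would characterize $\mathcal{H}_{\mathcal{D}_X, L_r}(0)$. With threshold $\tau=0$ and the single regularizer $g(z)=\sum_i z_i$, a masked representation $h_W$ (so $w_i=[0,u_i]$) is feasible iff $L_r(h_W,g;\mathcal{D}_X)=\mathbb{E}_x[(x_1-\sum_i (u_i^\top x_{2:d})^2)^2]=0$, i.e.\ $x_1=\sum_i (u_i^\top x_{2:d})^2$ almost surely. Writing $U=[u_1,\ldots,u_r]$ and $U^*=[u_1^*,\ldots,u_r^*]$ and invoking the data property $x_1=\sum_i((u_i^*)^\top x_{2:d})^2$, this becomes the quadratic-form identity $v^\top U U^\top v = v^\top U^*(U^*)^\top v$ for $v=x_{2:d}$ almost surely. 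Since $\Sigma=\mathbb{E}[x_{2:d}x_{2:d}^\top]$ is full rank (the law of $x_{2:d}$ is rich enough that a symmetric matrix vanishing as a quadratic form a.s.\ must be zero), this forces $U U^\top = U^*(U^*)^\top =: P$, the rank-$r$ orthogonal projection onto $\mathrm{span}(u_1^*,\ldots,u_r^*)$. Idempotence of $P$ together with $\|u_i\|_2=1$ then yields $U^\top U = I_r$, and since $\mathrm{range}(U)=\mathrm{range}(U^*)$ we may write $U=U^* O^\top$ with $O$ orthonormal, recovering exactly the stated form of $\mathcal{H}_{\mathcal{D}_X, L_r}(0)$.

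Next, for each $S\subseteq\{r+1,\ldots,d-1\}$ with $|S|=r$, define the copy $\mathcal{H}_S := \{h_W : w_i=[0,u_i],\ U=U^*_S O^\top,\ O\text{ orthonormal}\}$, where $U^*_S$ collects the eigenvectors of $\Sigma$ indexed by $S$. Each $\mathcal{H}_S\subseteq\mathcal{H}$ (these are masked linear maps with unit-norm rows), and by the isometry relating the orthonormal blocks $U^*_S$ we get $\mathcal{N}_{\mathcal{H}_S}(\epsilon')=\mathcal{N}_{\mathcal{H}_{\mathcal{D}_X, L_r}(0)}(\epsilon')$ for every $\epsilon'$. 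The key computation is the pairwise separation: for $S\ne S'$ and any orthonormal $O,O'$, the vanishing first coordinates give $\|W-W'\|_F^2=\|U^*_S O^\top - U^*_{S'}(O')^\top\|_F^2 = 2r - 2\,\mathrm{trace}\big((U^*_S)^\top U^*_{S'}\,Q\big)$ with $Q$ orthonormal. Because $(U^*_S)^\top U^*_{S'}$ is a partial permutation matrix with exactly $|S\cap S'|$ nonzero (unit) singular values, von Neumann's trace inequality bounds the cross term by $|S\cap S'|\le r-1$, so $\|W-W'\|_F^2\ge 2$ — the same separation constant as in Lemma~\ref{lem:auto-encoder}.

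Finally, since $\epsilon/4L<1/2$, every $(\epsilon/4L)$-ball has diameter $<1<\sqrt2$, hence can meet at most one copy $\mathcal{H}_S$. The balls of an optimal $(\epsilon/4L)$-cover of $\mathcal{H}$ therefore split into groups, one per copy, and the balls meeting $\mathcal{H}_S$ cover it, so at least $\mathcal{N}_{\mathcal{H}_S}(\epsilon/4L)$ of them are needed. Summing over the $\binom{d-1-r}{r}$ admissible $S$ gives $\mathcal{N}_{\mathcal{H}}(\epsilon/4L)\ge \binom{d-1-r}{r}\,\mathcal{N}_{\mathcal{H}_{\mathcal{D}_X, L_r}(0)}(\epsilon/4L)$, as claimed. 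I expect the main obstacle to be the first step: rigorously deducing $U U^\top=U^*(U^*)^\top$ (and thence $U^\top U=I_r$) from the almost-sure matching of quadratic forms, which genuinely requires a non-degeneracy assumption on the law of $x_{2:d}$; by contrast the separation and counting steps transfer essentially verbatim from the auto-encoder case.
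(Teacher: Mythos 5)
Your proof is correct and follows the paper's skeleton (as you intended): characterize $\mathcal{H}_{\mathcal{D}_X,L_r}(0)$, embed $\binom{d-1-r}{r}$ isometric copies $\mathcal{H}_S \subseteq \mathcal{H}$ indexed by the $r$-subsets $S$ of $\{r+1,\ldots,d-1\}$, establish the pairwise separation $\|W-W'\|_F^2 \ge 2$, and count. Where you genuinely differ is the characterization step, and your version is tighter than the paper's. The paper applies Jensen's inequality twice to get $\mathbb{E}[L_r(h,g;x)] \ge \bigl|\sum_{i} u_i^\top \Sigma u_i - \sum_i (u_i^*)^\top \Sigma u_i^*\bigr|^2$ and then asserts that zero loss holds if and only if the spans coincide; as written this only proves that trace equality is \emph{necessary}, and since the $u_i$ are merely unit-norm (not constrained to be mutually orthogonal), equality of traces against $\Sigma$ does not by itself pin down the span. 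Your argument uses the full strength of $\mathbb{E}[(\cdot)^2]=0$: the quadratic forms $v\mapsto v^\top UU^\top v$ and $v\mapsto v^\top U^*(U^*)^\top v$ agree almost surely, so under a non-degeneracy assumption on the law of $x_{2:d}$ (which you correctly flag, and which the paper needs implicitly) one gets $UU^\top = U^*(U^*)^\top$, and then $U^\top U = I_r$ by idempotence. Likewise, invoking von Neumann's trace inequality to bound the cross term by $|S\cap S'| \le r-1$ makes rigorous a step that the paper (via the computation in Lemma~\ref{lem:auto-encoder}, which its proof of this lemma simply cites) writes with the orthonormal factors $O^\top O'$ silently dropped. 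The only place you inherit the paper's informality is the final count, where balls of a cover of $\mathcal{H}$ (with centers possibly outside $\mathcal{H}_S$) are treated as covers of each copy $\mathcal{H}_S$; this is harmless under the external-cover convention, and otherwise costs only a factor of $2$ in the radius.
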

\begin{proof}
By definition,
\begin{align}\label{eqn:masked_lr_bound}
    \mathbb{E} \left[L_r(h, g; x)\right]  &= \ \mathbb{E} \left[  \sum_{i=1}^r u_i^\top x_{2:d} x_{2:d}^\top u_i - \sum_{i=1}^r (u^*_i)^\top x_{2:d} x_{2:d}^\top u^*_i \right]^2 \\
    & \ge  \left(\mathbb{E} |\sum_{i=1}^r u_i^\top x_{2:d} x_{2:d}^\top u_i - \sum_{i=1}^r (u^*_i)^\top x_{2:d} x_{2:d}^\top u^*_i|  \right)^2\\
    & \ge \ \left | \mathbb{E}\sum_{i=1}^r u_i^\top x_{2:d} x_{2:d}^\top u_i - \mathbb{E} \sum_{i=1}^r (u^*_i)^\top x_{2:d} x_{2:d}^\top u^*_i  \right|^2,\\
    & \ge \ \left | \sum_{i=1}^r u_i^\top \Sigma u_i -  \sum_{i=1}^r (u^*_i)^\top \Sigma u^*_i  \right|^2.
\end{align} 
Therefore, $\mathbb{E} \left[L_r(h, g; x)\right] = 0 $ if and only if $u_1{, \ldots,} u_r$ span the same subspace as $u_1^*, \ldots, u_r^*$, i.e., 
\begin{align*}
    \mathcal{H}_{\mathcal{D}_X, L_r}(0) {=} \{
    h_W(x): w_i = [0,u_i], [u_1{, \ldots,} u_r]^\top {=} O [u_1^*, \ldots, u_r^*]^\top, O{\in}\mathbb{R}^{r \times r}, O \textrm{~is orthonormal}\}.
\end{align*}
On the other hand, if $u^*_1, u^*_2, \ldots, u^*_{d-1}$ are the eigenvectors of $\Sigma$, and $U^*_I := [u_{i_1}^*, \ldots, u_{i_r}^*]^\top$ for indices $I = \{i_1, i_2, \ldots, i_r\} \subseteq \{r+1, r+2, \ldots, d-1\}$,  then clearly
\begin{align*}
    \mathcal{H} \subseteq \mathcal{H}_I := \{
    h_W(x): w_i = [0,u_i], [u_1{, \ldots,} u_r]^\top {=} O (U^*_I)^\top, O{\in}\mathbb{R}^{r \times r}, O \textrm{~is orthonormal}\}
\end{align*}
for any $I = \{i_1, i_2, \ldots, i_r\} \subseteq \{r+1, r+2, \ldots, d-1\}$.
By symmetry, $\mathcal{N}_{\mathcal{H}_I}(\epsilon') = \mathcal{N}_{\mathcal{H}_{\mathcal{D}_X, L_r}(0)}(\epsilon')$ for any $\epsilon'>0$.
Using an argument similar to Section~\ref{app:example_auto}, we can show that for two different index sets $I$ and $I'$, any hypothesis in $\mathcal{H}_I$ and any hypothesis in $\mathcal{H}_{I'}$ cannot be covered by the same ball in any $\epsilon'$-cover with $\epsilon'<1/2$. This completes the proof.
\end{proof}
 
\section{Experiments on Concrete Functional Regularization Examples} \label{app:experiments}

\subsection{Auto-Encoder}
\label{app:auto_encoder_experiments}

\myparagraph{Data: } 
We first generate $d$ orthonormal vectors($\{u_i\}_{i=1}^{i=d}$) in $\mathbb{R}^{d}$. We then randomly generate means $\mu_i$ and variances $\sigma_i$ corresponding to each principal component $i\in[1,d]$ such that $\sigma_1 {> \dots >} \sigma_r \gg \sigma_{r+1} {> \dots >} \sigma_d$. The $\mu_i$'s are randomly generated integers in $[0,20]$ and the variances $\sigma_i \ , i \in [1,r]$ are each generated randomly from $[1,10]$ and $\sigma_i \ , i \in [r+1,d]$ are each generated randomly from $[0,0.1]$. We also generate a vector $a\in \mathbb{R}^{r}$ randomly such that $||a||_2 \le 1$. To generate a data point $(x,y)$, we sample $\alpha_i {\sim} \mathcal{N}(\mu_i,\sigma_i) \ \forall i\in[1,d]$ and set $x=\sum_{i=1}^{d} \alpha_i u_i$ and $y=\sum_{i=1}^{r} a_i {\alpha_i}^2 + \nu$ where $\nu {\sim} \mathcal{N}(0,{10}^{-2})$. We use an unlabeled dataset of $10^4$ points (when using the auto-encoder functional regularization), a labeled training set of $10^4$ points and a labeled test set of $10^3$ points. 

\myparagraph{Models: }
$h_W$ corresponds to a fully connected NN, without any activation function, to transform $x \in \mathbb{R}^{d}$ to its representation $h(s) \in \mathbb{R}^{r}$. For prediction on the target task, we use a linear classifier after a quadratic activation on $h(x)$ to obtain a scalar output $\hat{y}$. For functional regularization $g_V$, we use a fully connected NN to transform the representation $h(s) \in \mathbb{R}^{r}$ to reconstruct the input back $\hat{x} \in \mathbb{R}^d$. Our example additionally constrains $V, W$ to be orthonormal. For achieving this, we add an orthonormal regularization~\cite{DBLP:conf/iclr/BrockLRW17,10.5555/3327144.3327339} penalty for each $V, W$ weighted by hyper-parameters $\lambda_1$ and $\lambda_2$ respectively during the auto-encoder reconstruction. For a matrix $M \in R^{a\times b}$, the orthonormal regularization penalty to ensure that the rows of $M$ are orthonormal, is given by $ \sum_{ij} | (M M^\top)_{ij} - I_{ij}|$ where $\sum_{ij}$ is summing over all the matrix elements, $I$ is the identity matrix in $R^{a\times a}$.

\myparagraph{Training Details: }
For end-to-end training, we train the predictor and $h$ jointly using a MSE loss between the predicted target $\hat{y}$ and the true $y$ on the labeled training data set. For functional regularization, we first train $h$ and $g$ using the MSE loss between the reconstructed input $\hat{x}$ and the original input $x$ over the unlabeled data set. Here, we also add the orthonormal regularization penalties. We tune the weights $\lambda_1$ and $\lambda_2$ using grid search in $[10^{-3},10^{3}]$ in multiplicative steps of $10$ to get the best reconstruction (least MSE) on the training data inputs. Now using $h$ initialized from the auto-encoder, we use the labeled training data set to jointly learn the predictor and $h$ using a MSE loss between the predicted target $\hat{y}$ and the true $y$. We report the MSE on the test set as the metric. For all optimization steps we use an SGD optimizer with momentum set to 0.9 where the learning rate is tuned using grid search in $[10^{-5},10^{-1}]$ in multiplicative steps of $10$. We set the data dimension $d=100$ and report the test MSE averaged over 10 runs. 

\myparagraph{t-SNE Plots of Functional Approximations}
To get a functional approximation from a model, we compute and concatenate the output predictions $\hat{y}$ from the model over the test data set. For every model, we obtain a $\mathbb{R}^{1000}$ vector corresponding to the size of the test set. We perform 1000 independent runs for each model (with and without functional regularization) obtaining $2,000$ functional approximation vectors in $\mathbb{R}^{1000}$. We visualise these vectors in 2D using the t-SNE~\cite{vanDerMaaten2008} algorithm.

\myparagraph{Varying Dimension $d$:} We plot the reduction in test MSE between end-to-end training and using functional regularization on varying $d$ in Figure~\ref{fig:d_auto}. Here we fix $r=30$ and vary the data dimension $d$ and present the test MSE scores normalized with the average norm $\|x\|_2^2$ over the test data. As per indications from our derived bounds, the reduction remains more or less constant on varying $d$.

\begin{figure}[t]
\centering
\subfigure[Auto-Encoder]{\label{fig:7}\includegraphics[width=.4\linewidth]{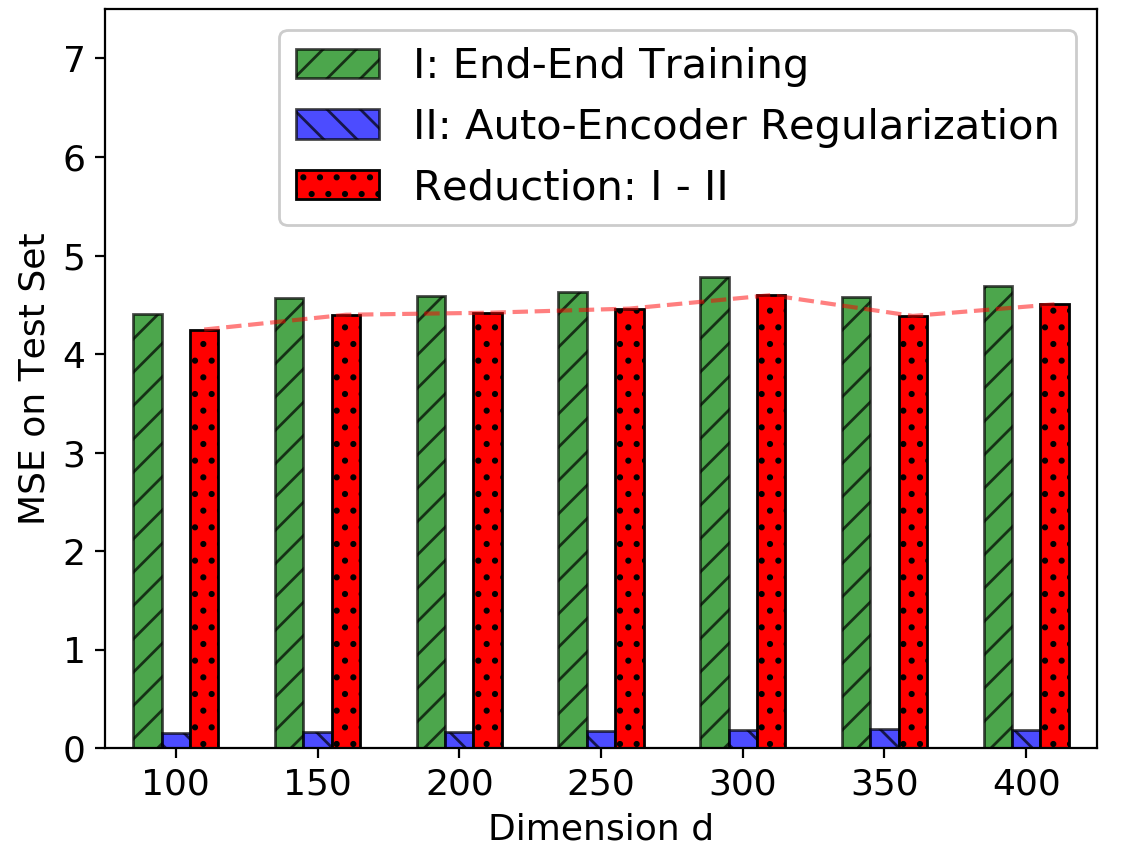}} 
\subfigure[Masked Self-Supervision]{\label{fig:8}\includegraphics[width=.4\linewidth]{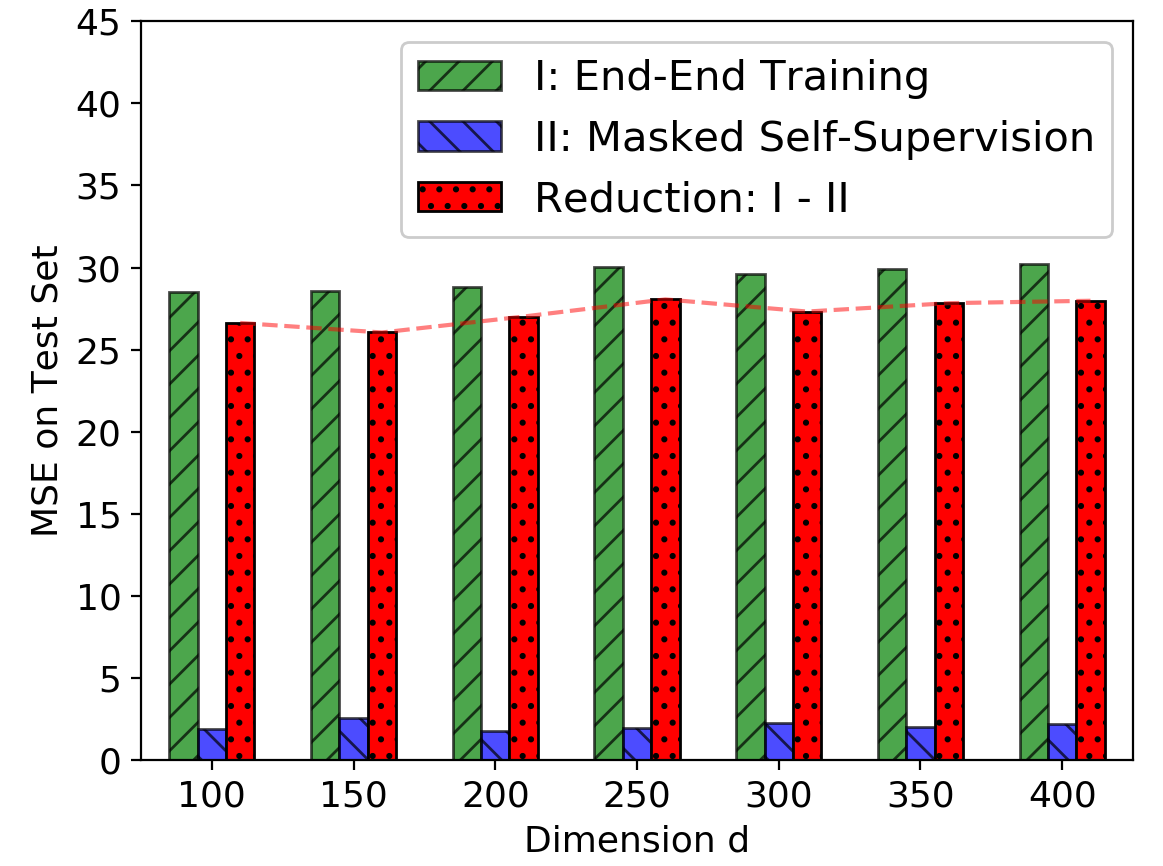}} 
\vspace{-1em}
\caption{Reduction in Test MSE (on using functional regularization with respect to end-to-end training) with dimension $d$. Here $r=30$ and the Test MSE are normalized by the average test $\|x\|_2^2$.}\label{fig:d_auto}\label{fig:d_mask}
\end{figure} 

\subsection{Masked Self-Supervision}
\label{app:masked_experiments}

\myparagraph{Data:}
We first generate $d{-}1$ orthonormal vectors($\{u_i\}_{i=2}^{i=d}$) in $\mathbb{R}^{d-1}$. We then randomly generate means $\mu_i$ and variances $\sigma_i$ corresponding to each principal component $i\in[2,d]$ such that $\sigma_2 {> \dots >} \sigma_{r+1} \gg \sigma_{r+2} {> \dots >} \sigma_{d}$. The $\mu_i$'s are randomly generated integers in $[0,20]$ and the variances $\sigma_i \ , i \in [2,r+1]$ are each generated randomly from $[1,10]$ and $\sigma_i \ , i \in [r+2,d]$ are each generated randomly from $[0,0.1]$. We also generate a vector $a\in \mathbb{R}^{r}$ randomly such that $||a||_2 \le 1$.
To generate a data point $(x,y)$, we sample $\alpha_i {\sim} \mathcal{N}(\mu_i,\sigma_i) \ \forall i\in[2,d]$ and set $x_1=\sum_{i=2}^{r+1} \alpha_i^{2}$, $x_{2:d}=\sum_{i=2}^{d} \alpha_i u_i$ and $y=\sum_{i=2}^{r+1} a_i \alpha_i^{2} + \nu$ where $\nu {\sim} \mathcal{N}(0,{10}^{-2})$. We use an unlabeled dataset of $10^4$ points, a labeled training set of $10^4$ points and a labeled test set of $10^3$ points.

\myparagraph{Models:}
$h_W$ corresponds to a fully connected NN, using a quadratic activation function, to transform $x \in \mathbb{R}^{d}$ to its representation $h(s) \in \mathbb{R}^{r}$. For prediction on the target task we use a linear classifier to obtain the output $\hat{y}$ from the representation $h(x)$. For functional regularization, we sum the elements of the representation $h(x) \in \mathbb{R}^{r}$ to reconstruct the first input $\hat{x_1} \in \mathbb{R}$ back.

\myparagraph{Training Details:}
For functional regularization, we mask the first dimension of the unlabeled data by setting it to $0$ and train $h$ using the MSE loss between the reconstructed $\hat{x_1}$ and the original input dimension $x_1$. Other experimental details remain similar to Section~\ref{app:auto_encoder_experiments}.

Experimental details for the t-SNE plots of functional approximation remain similar to Section~\ref{app:auto_encoder_experiments}.

\myparagraph{Varying Dimension $d$:} Following similar details to Section~\ref{app:auto_encoder_experiments}, we present the graph in Figure~\ref{fig:d_mask}. We can observe that the reduction does not change much on varying $d$ here as well.

\section{Additional Experiments on Functional Regularization}
\label{app:additional_expts}

There have been several empirical studies verifying the benefits of functional regularization across different applications. Here we present empirical results showing the benefits of using functional regularization on a computer vision and natural language processing application.

\subsection{Image Classification}
We consider the application of image classification using the Fashion MNIST dataset~\cite{DBLP:journals/corr/abs-1708-07747} which contains $28 \times 28$ gray-scale images of fashion products from 10 categories. This dataset has 60k images for training and 10k images for testing. We consider a denoising auto-encoder functional regularization using unlabeled data and evaluate its benefits to supervised classification using labeled data. 

\myparagraph{Experimental Details} We use a denoising auto-encoder as the functional regularization when learning from unlabeled data. The encoder consists of three fully connected layers with ReLU activations to obtain the input representation $h(x)$ of 1024 dimensions from an input $x$. The decoder consists of three fully connected layers with ReLU activations to reconstruct the $28 \times 28$ image $\hat{x}$ back from the 1024 dimensional representation $h(x)$. For training, the pixel values of $x$ are normalized to $[0,1]$ and independently corrupted by adding a Gaussian noise with mean $0$ and standard deviation $0.2$. The MSE loss between the $x$ and $\hat{x}$ is used as the regularization loss $L_r$. Training is performed using the Adam optimizer with a learning rate of $3\times 10^{-4}$. For classification, we use a simple linear layer which maps $h(x)$ to the class label $\hat{y}$. The classifier and the encoder are trained jointly using the cross entropy loss between $\hat{y}$ and the original label $y$. We compare the test set accuracy of 1) directly training the encoder and the target classifier using the labeled training data, and 2) pre-training the encoder using the de-noising auto-encoder functional regularization and then fine-tuning its weights along with the target classifier using the labeled training data. We vary the size of the labeled training data and plot the test accuracy averaged across 5 runs in Figure~\ref{fig:fashionmnist-1}.

To visualize the impact of the denoising auto-encoder functional regularization, we follow the details in Appendix~\ref{app:auto_encoder_experiments} to get the functional approximation of the model. For each model, we obtain a $\mathbb{R}^{10000 \times 10}$ matrix with softmax values for 10 target classes for each of the 10000 test points. We perform 100 independent runs for each method (with and without the functional regularization) obtaining $200$ functional approximation vectors in $\mathbb{R}^{100,000}$. We visualise these vectors in 2D using the Isomap~\cite{Tenenbaum2319} algorithm~\footnote{The t-SNE algorithm focuses more on neighbour distances by allowing large variance in large distances, while Isomap approximates geodesic distance via shortest paths thereby working well in practice with larger distances. Compared to the controlled data experiments where the functional approximation lies in $\mathbb{R}^{1000}$, the functional approximation for Fashion-MNIST lies in $\mathbb{R}^{100,000}$, thereby visualizing better via Isomap than t-SNE.} in Figure~\ref{fig:fashionmnist-2}.

\begin{figure}[t]
\centering
\subfigure[]{\label{fig:fashionmnist-1}\includegraphics[width=.38\linewidth]{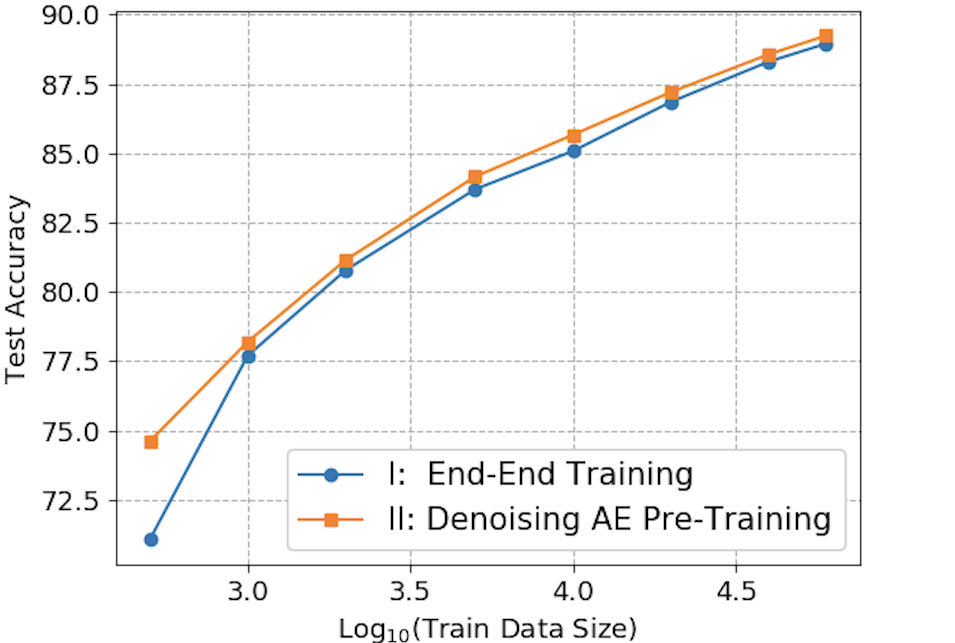}} 
\subfigure[]{\label{fig:fashionmnist-2}\includegraphics[width=.42\linewidth]{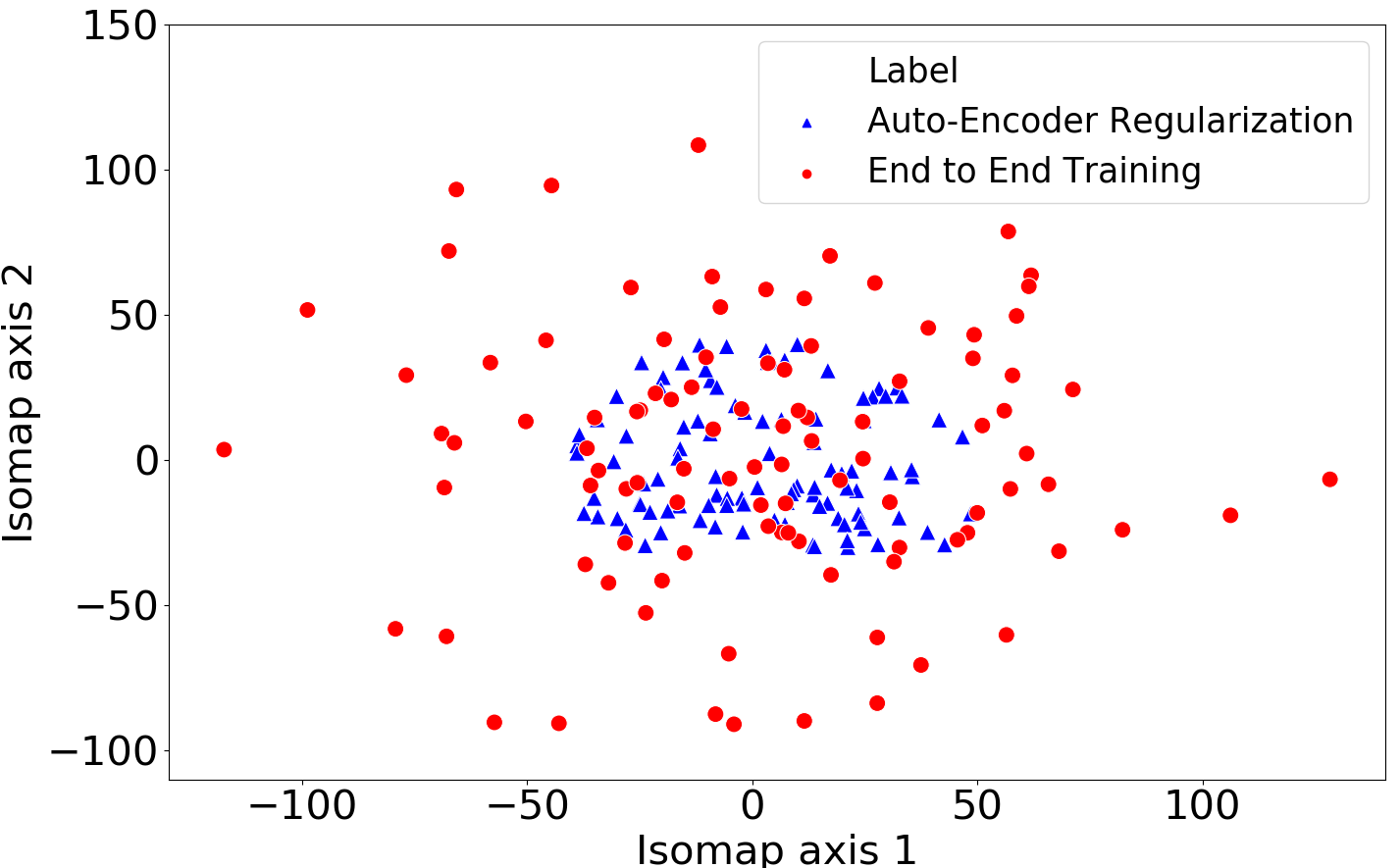}} 
\vspace{-1em}
\caption{Experimental results on \textbf{Fashion-MNIST}. (a) Test  accuracy using de-noising auto-encoder functional regularization compared to end-to-end training on varying the size of labeled training data. (b) The 2D visualization of the functional approximation of 100 independent runs for each method.}
\label{fig:fashionmnist}
\end{figure} 

\myparagraph{Results} From Figure~\ref{fig:fashionmnist-1}, we observe that the test accuracy of end to end training is inferior to that of using functional regularization with unlabeled data across a variety of labeled data sizes. We observe that the difference in the test accuracy between the two methods is highest when the amount of labeled data available is small and the performance gap decreases as the amount of labeled data increases, as predicted by our theory.

Figure~\ref{fig:fashionmnist-2} visualizes the functional approximation learned by the model. It shows that when using the denoising auto-encoder functional regularization, the learned functions stay in a smaller functional space, while they are scattered when using end to end training. This is in line with our empirical observations on controlled data, and our intuition for the theoretical analysis: pruning the representation hypothesis space via functional regularization translates to a compact functional space.

\subsection{Sentence Pair Classification}
We consider the application of sentence pair classification using the Microsoft Research Paraphrase Corpus~\cite{dolan-brockett-2005-automatically}\footnote{\url{https://www.microsoft.com/en-us/download/details.aspx?id=52398}} which has sentence pairs with annotations of whether the two sentences are semantically equivalent. This dataset has approximately 3.7k and 1.7k sentence pairs in the train and test splits respectively. Here we specifically choose the MRPC dataset as it has a smaller size of labeled training data in comparison to most NLP datasets. To show the empirical benefits of using unlabeled data in addition to the limited train data available, we use a pre-trained BERT~\cite{devlin-etal-2019-bert} language model. BERT, based on a transformer architecture, has been pre-trained using a masked token self-supervision task which involves masking a portion of the input sentence and using BERT to predict the masked tokens. This pre-training is done over a large text corpus ($\sim$ 2 billion words) and hence we can view the pre-trained BERT, under our framework, as having already pruned a large fraction of the hypothesis space of $\mathcal{H}$ for learning the representation on the input text.

\myparagraph{Experimental Details}
We compare the performance of fine-tuning the pre-trained BERT with training a randomly initialised BERT from scratch. For the latter, we use three different loss formulations to further study the benefits of regularization on the text representation being learnt: (i) the Cross-Entropy loss $\mathcal{L}_{CE}$ on the predicted output (ii) $\mathcal{L}_{CE}$ along with a $\ell_1$ norm penalty on the representation (i.e, the 768-dimensional representation from BERT corresponding to the [CLS] token) (iii) $\mathcal{L}_{CE}$ along with a $\ell_2$ norm penalty on the representation. We refer to these three different loss formulations as Random, Random-$\ell_1$ and Random-$\ell_2$ respectively for notational simplicity. We want to study how varying the labeled data can impact the performance of different training methods. We present the results in Table~\ref{table:bert}. We use the 12-layer BERT Base uncased model for our experiments with an Adam optimizer having a learning rate $2e^{-5}$. We perform end to end training on the training data and tune the number of fine-tuning epochs. We report the accuracy and F1 scores as the metric on the test data averaged over 3 runs. When randomly initialising the weights of BERT, we use a standard normal distribution with mean 0 and standard deviation of 0.02 for the layer weights and set all the biases to zero vectors. We set the layer norms to have weights as a vector of ones with a zero vector as the bias. When adding the $l_p$ penalty on the BERT representations on randomly initialising the weights, we choose an appropriate weighting function $\lambda$ to make the training loss a sum of the cross entropy classification loss and $\lambda$ times the $l_p$ norm of the BERT representation. The $\lambda$ is chosen $\in [{10}^{-3}, {10}^{3}]$ by validation over a set of 300 data points randomly sampled from the training split. We use the huggingface transformers repository \footnote{\url{https://github.com/huggingface/transformers}} for our experiments.

\myparagraph{Results}
From the table, we observe that the performance of training BERT from pre-trained weights is better than the performance of training the BERT architecture from randomly initialised weights. When viewed under our framework, this empirically shows the benefits of using a learnable regularization function over fixed functions like the $\ell_1$ or $\ell_2$ norms of the representation.

\begin{table}[t]
\centering
\resizebox{0.9\linewidth}{!}{
\begin{tabular}{cccccc}
 \toprule  Train Data Size                   & 200       & 500        & 1000      & 2000      & 3668      \\ \midrule 
BERT-FT                       & \textbf{68.1 / 80.6} & \textbf{71.0 / 80.6}  & \textbf{72.7 / 81.8} & \textbf{74.9 / 82.4} & \textbf{80.3 / 85.7} \\
Random              & 64.1 / 74.8 & 64.7 / 75.66 & 67.0 / 80.1 & 68.9 / 79.0   & 68.9 / 79.3 \\
Random-$\ell_1$  & 54.7 / 65.1 & 62.6 / 75.5  & 63.6 / 76.7 & 63.4 / 76.6 & 66.3 / 79.6 \\
Random-$\ell_2$ & 65.3 / 78.6 & 66.4 / 79.7  & 65.3 / 78.6 & 65.0 / 78.4 & 66.5 / 79.9 \\ \bottomrule
\end{tabular}
}
\vspace{1em}
\caption{Performance of fine-tuning pre-trained BERT (BERT-FT) and end-to-end training of a randomly initialised BERT on varying the \textbf{MRPC} training dataset size. 
Metrics are reported in the format Accuracy/F1 scores on the test dataset. The training data size is 3668 sentence pairs.}
\vspace{-1em}
\label{table:bert}
\end{table}

On increasing the training data size, we observe that the performance of all the four training modes increases. However, we can see that the performance improvement of Random, Random-$\ell_1$ and Random-$\ell_2$ is marginal when compared to the improvement in BERT Fine-tuning. The latter can be attributed to the fact that the pre-trained weights of BERT are adjusted by specialising them towards the target data domain. To support this, in addition to Table~\ref{table:bert}, we also experimented by keeping the BERT weights fixed and only training the classifier. We observe that under such a setting, when we use a small training set, the model is unable to converge to a model different from the initialisation as similarly observed by \cite{kovaleva-etal-2019-revealing}. This means that the learning indeed needs searching over a set of suitable hypotheses.
Thus, we can conclude that unlabeled data helps in restricting the search space, and a small labeled data set can find a hypothesis suitable for the target domain data within the restricted search space, consistent with our analysis.
\end{document}